%% file: main.tex
\documentclass[10pt,twocolumn,letterpaper]{article}

\usepackage[pagenumbers]{cvpr} 

\input{preamble}

\definecolor{cvprblue}{rgb}{0.21,0.49,0.74}
\usepackage[pagebackref,breaklinks,colorlinks,allcolors=cvprblue]{hyperref}

\def\paperID{*****} 
\def\confName{CVPR}
\def\confYear{2026}

\title{\paper: Local Density-Aware Similarity Scoring for Biometrics}

\author{
Yiyang Su\textsuperscript{1}\quad
Minchul Kim\textsuperscript{1}\quad
Jie Zhu\textsuperscript{1}\quad
Christopher Perry\textsuperscript{1}\\
Feng Liu\textsuperscript{2}\quad
Anil Jain\textsuperscript{1}\quad
Xiaoming Liu\textsuperscript{1,3}\\
\textsuperscript{1} Michigan State University\quad
\textsuperscript{2} Drexel University\quad
\textsuperscript{3} University of North Carolina, Chapel Hill\\
{\tt\small \{suyiyan1, kimminc2, zhujie4, perrych2\}@msu.edu, fl397@drexel.edu, liuxm@cs.unc.edu}
}

\begin{document}
\maketitle
\input{sec/0_abstract}
\input{sec/1_intro}
\input{sec/2_related_work}
\input{sec/3_method}
\input{sec/4_experiments}
\input{sec/5_analyses}
\input{sec/6_conclusion}
{
    \small
    \bibliographystyle{ieeenat_fullname}
    \bibliography{sec/refs}
}

\input{sec/X_suppl}

\end{document}

%% file: preamble.tex
\usepackage{microtype}

\renewcommand{\paragraph}[1]{\vspace{.5em}\noindent\textbf{#1}}

\usepackage{xspace}
\newcommand{\paper}{LocalScore\xspace}

\usepackage{nicematrix}
\usepackage{pifont} 
\usepackage{siunitx} 
\usepackage{algorithm} 
\usepackage{algorithmic} 

\usepackage{amsthm} 
\theoremstyle{plain}
\newtheorem{theorem}{Theorem}

\newcommand{\smallpm}{{\scriptstyle \pm}}

\usepackage{listings}

\definecolor{codegreen}{rgb}{0,0.6,0}
\definecolor{codegray}{rgb}{0.5,0.5,0.5}
\definecolor{codepurple}{rgb}{0.58,0,0.82}
\definecolor{backcolour}{rgb}{0.95,0.95,0.92}

\lstdefinestyle{modern}{
    basicstyle=\ttfamily\footnotesize,    
    commentstyle=\color{codegreen},
    keywordstyle=\color{magenta},
    numberstyle=\tiny\color{codegray},
    stringstyle=\color{codepurple},
    backgroundcolor=\color{black!5},     
    frame=single,                         
    rulecolor=\color{black},              
    breaklines=true,                      
    columns=flexible,                     
    showstringspaces=false,               
    upquote=true,                         
    morekeywords={int, float, if, else, return}  
}

\newcounter{algo}

%% file: sec/0_abstract.tex
\begin{abstract}

Open-set biometrics faces challenges with probe subjects who may not be enrolled in the gallery, as traditional biometric systems struggle to detect these non-mated probes. Despite the growing prevalence of multi-sample galleries in real-world deployments, most existing methods collapse intra-subject variability into a single global representation, leading to suboptimal decision boundaries and poor open-set robustness. To address this issue, we propose \paper, a simple yet effective scoring algorithm that explicitly incorporates the local density of the gallery feature distribution using the \(k\)-th nearest neighbors. \paper is architecture-agnostic, loss-independent, and incurs negligible computational overhead, making it a plug-and-play solution for existing biometric systems.  Extensive experiments across multiple modalities demonstrate that \paper consistently achieves substantial gains in open-set retrieval (FNIR@FPIR reduced from \(53\%\) to \(40\%\)) and verification (TAR@FAR improved from \(51\%\) to \(74\%\)). We further provide theoretical analysis and empirical validation explaining when and why the method achieves the most significant gains based on dataset characteristics.
\end{abstract}

%% file: sec/1_intro.tex
\section{Introduction}
\label{sec:intro}
Biometrics involves identifying individuals through physical or behavioral characteristics, such as facial features~\cite{Jain2025}, gait patterns~\cite{shen2022comprehensive}, and body shape~\cite{zahra2023person}. It is integral to applications ranging from access control to surveillance~\cite{ross2019some}. At the core, biometric systems compute similarity scores to assess how closely a probe matches enrolled subjects in a gallery. Verification determines whether a probe matches a specific gallery subject and retrieval ranks all gallery subjects by similarity to the probe.

In biometric retrieval, early work focused on closed-set retrieval, where all probe subjects are assumed to exist in the gallery (\ie, mated). In contrast, open-set search represents a more realistic generalization, as the model must accurately {\it identify} mated probes while {\it rejecting} non-mated ones. These requirements make open-set retrieval inherently more challenging than closed-set retrieval, which only concerns within-gallery identification. Open-set evaluation has been recognized as one of the most challenging scenarios in biometrics~\cite{gunther2017unconstrained}. While state-of-the-art (SoTA) whole-body biometric system achieving over \(90\%\) closed-set Rank-20 accuracy, when only 1\% of non-mated probes are incorrectly identified as mated, it still fails to correctly identify about 55\% of mated probes~\cite{liu2024farsight}. Hence, in this paper, we focus on open-set retrieval and verification.

\begin{figure}
    \centering
    \includegraphics[width=\linewidth]{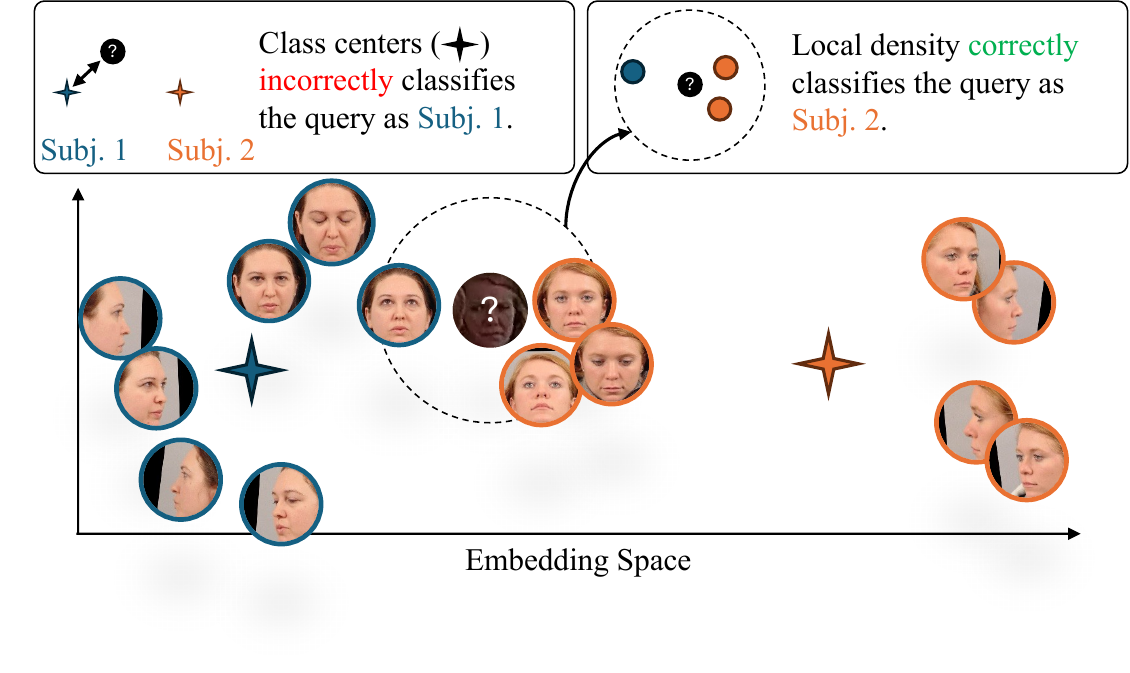}
    \vspace{-0.45in}
    \caption{Illustration of the effect of local density in face recognition. Each face represents a gallery feature, and distinct clusters correspond to different viewpoints. Conventional biometric systems only represent each subject by a single class center (shown as the stars). However, a probe sample may lie close to this center yet remain far from all actual clusters, leading to false acceptance in verification or open-set retrieval. In contrast, the local density of the gallery feature distribution captures the true intra-subject variability and correctly rejects such non-mated probes.}
    \label{fig:teaser}
\end{figure}

In real-world deployments, it is increasingly feasible and common to collect, process, and store multiple samples for each enrolled identity, making the multi-sample gallery setting valid and practically relevant. Therefore, modern large-scale biometric galleries tend to have multiple samples per subject~\cite{ross2006handbook}. For example, CCPG~\cite{li2023depth} includes $33$ videos per gallery subject on average.

Despite this, most biometric methods have not explicitly explored how to leverage multi-sample galleries. Existing approaches are often simplistic, relying on score averaging across gallery samples~\cite{poh2002multi} or assuming that each subject can be represented by a single aggregated feature embedding~\cite{su2024open,kim2022cluster}. These models extract biometric features from input samples (\eg, images or videos) and then aggregate them into a single class center per subject, overlooking the intra-subject variation in multi-sample galleries.

Na\"ively averaging gallery feature embeddings or scores fails to capture that subject’s true variability. As illustrated in \cref{fig:teaser} (top-left), such approaches rely solely on similarity to a subject's global center, ignoring the diverse conditions (\eg, pose, illumination, viewpoint) that produce distinct clusters within the same subject. Consequently, collapsing multi-cluster gallery distributions into a single center leads to suboptimal decision boundaries and degraded discrimination. This limitation is particularly detrimental in open-set scenarios, where accurately rejecting unseen subjects requires modeling the full extent of intra-subject variation.

To illustrate this issue, \cref{fig:teaser} presents a face recognition example in which exemplar subjects’ gallery embeddings form several distinct clusters associated with different camera viewpoints. A probe may lie near the center of subject \#1 yet remain relatively far from its clusters, leading to a false acceptance of a non-mated sample. Conversely, a genuine probe from an underrepresented
cluster may be erroneously rejected. These failure cases highlight the limitations of relying solely on class centers and motivate the need for methods that leverage the \emph{local density} of gallery features to more accurately reflect intra-subject variation.

We propose \paper, a novel scoring framework that models intra-subject variability through local structure rather than global averaging in biometric systems. The key idea is that the $k$-th nearest neighbor (\(k\)-NN) provides a simple yet effective way to capture the local geometry of a subject’s gallery features. As shown in \cref{fig:teaser}, a non-mated probe may lie near the global gallery center yet remain distant from all true clusters, causing traditional center-based scoring to falsely accept it. \paper addresses this by assessing how well a probe aligns with the local neighborhoods of genuine gallery samples using \(k\)-NN. Probes distant from their nearest neighbors are more likely imposters, while genuine probes tend to be closer. By leveraging the local gallery structure, \paper substantially improves verification accuracy and open-set robustness. Moreover, \paper is easily deployable, implemented in just a single line of code. It is a plug-and-play module compatible with any biometric model, independent of architecture or loss function, and introduces negligible computational overhead.
 
We evaluate \paper across three major biometric modalities—face recognition, gait recognition, and person reID using diverse model architectures, including ResNet~\cite{gu2022clothes}, ViT~\cite{liu2024distilling, ye2024biggait}, and Swin Transformer~\cite{fan2023exploring}. These models are trained with various loss functions such as cross-entropy variants~\cite{kim2022adaface} and triplet loss~\cite{hermans2017defense}. Across all settings, \paper consistently surpasses standard scoring methods, delivering substantial gains in both open-set performance (reducing FNIR@FPIR from \(53\%\) to \(40\%\)) and verification accuracy (raising TAR@FAR from \(51\%\) to \(74\%\)).

Additionally, we present comprehensive theoretical and empirical analyses to investigate the behavior of our proposed algorithm. 
Theoretically, we model the expected performance with and without \paper based on score statistics and provide guidance on the conditions under which \paper is expected to enhance open-set biometrics and verification.
Empirically, we validate the soundness of our theoretical insights on real-world datasets and through Monte Carlo simulations and examine the impact of \(k\) in \(k\)-NN and intra-subject variations on performance. 
We find that the algorithm consistently improves performance across a wide range of conditions and underscores its robustness.

We summarize our contributions as follows:
\begin{itemize} 
    \item We propose \paper, that incorporates the local gallery density information on gallery feature distribution around each probe into biometric scores.
    \item We validate the superior performance of our method across a variety of biometric models and tasks, significantly enhancing recognition accuracy.
    \item We present detailed analyses comparing traditional distance metrics and our algorithm in theory, through simulations on synthetic datasets, and on real-world datasets.
\end{itemize}

%% file: sec/2_related_work.tex
\section{Related Work}
\label{sec:related_work}

\paragraph{Biometrics.}
Biometric recognition has been extensively studied across modalities such as face recognition~\cite{ebrahimi2025gif, you2025lvface, peng2025stylized, kim2025vigface, kim2025idface}, gait recognition~\cite{shen2025lidargait++, yang2025bridging, jin2025denoising, ye2025biggergait, habib2025cargait, wang2025gait, huang2025learning}, person reID~\cite{wang2025secap, cui2025dkc, liang2025differ, yuan2025poses, pang2025identity, chen2025learning, khalid2025bridging, li2025one, zhang2025viperson, su2025hamobe, xu2025self}, and combination of modalities~\cite{liu2024farsight, liu2025person, zhu2025quality, kim2025sapiensid}.
Most prior work focuses on designing network architectures or loss functions to obtain more discriminative representations. However, scoring typically relies on the subjects’ aggregated gallery centers, overlooking the underlying structure of the gallery features. In contrast, we introduce \paper, which leverages the local density of gallery features to produce more reliable similarity scores. Also, while many existing methods emphasize verification or closed-set retrieval, only a few~\cite{gunther2017toward, su2024open} explicitly address the more challenging open-set setting.

\paragraph{Open-Set Recognition.}
Open-set recognition~\cite{wang2022openauc, kuchibhotla2022unseen, vaze2022openset, kong2021opengan, bendale2016towards, cruz2024operational, ren2023chatgpt} aims to distinguish between known and unknown classes in image classification. As noted in~\cite{su2024open}, the known classes correspond to those in the training set, while the unknown classes are those not used for training~\cite{geng2020recent}. In contrast, even in closed-set biometrics, the test subjects are not used for training, and mated probes are test subjects enrolled in the gallery~\cite{jain2021biometrics}. Open-set biometrics differs from closed-set biometrics in that non-mated subjects do not appear in either the training set or the gallery~\cite{gunther2017toward, grother2019face}.

While previous studies have explored the use of \(k\)-NN for out-of-distribution (OOD) detection~\cite{sun2022out} and open-set recognition~\cite{yang2022openood}, \(k\)-NN scores alone do not have sufficient discriminative information for biometric recognition. In contrast, our approach leverages \(k\)-NN to model the local feature density around probes and integrates this density measure into biometric scoring. 

\paragraph{Re-ranking.}
Re-ranking~\cite{zhong2017re, bai2019re, tan2021instance, sarfraz2018pose, shen2021re, ouyang2021contextual, yang2025cheb, zhang2023graph, barman2017shape} enhances the original ranking of probes obtained from a pretrained person reID model. 
Notably, \cite{zhong2017re} improves performance by considering if the probe is among the closest neighbors of the galleries. 
Although re-ranking can be applied to various biometric modalities, it depends on a large set of probes or queries to function effectively. 
However, in real-world applications, probes are often received continuously, and their distribution is unknown. 
As a result, the theoretical foundation and performance of re-ranking suffer when only one probe is available. 
In contrast, our method processes each probe independently, without requiring other probes. 
Further, our method is more computationally efficient than re-ranking due to its simplicity.

%% file: sec/3_method.tex
\section{Method}
\label{sec:method}

\crefname{theorem}{Thm.}{Thms.}

\subsection{Problem Setup}

Consider a gallery with \(N\) subjects, each with multiple samples (\eg, images or videos). 
For the \(i\)-th subject, the biometric features are 
\(\mathbf{g}_i^1, \ldots, \mathbf{g}_i^{m_i} \in \mathbb{R}^d\), 
and the full gallery is 
\(\mathcal{G} = \{\mathbf{g}_{i}^{j} \mid 1\!\le\! i\!\le\! N, 1\!\le\! j\!\le\! m_i\}\).
Given a query \(\mathbf{q} \in \mathbb{R}^d\), the goal is to compute
\(
\mathbf{s} = f(\mathbf{q}, \mathcal{G}),
\)
where each \(\mathbf{s}_i\) measures similarity between \(\mathbf{q}\) and subject \(i\).
The design of \(f\) directly influences verification and open-set retrieval performance.

A common strategy~\cite{su2024open} represents each subject with a single aggregated feature:
\begin{align}
    \mathbf{s}_i = 
    \left\langle 
    \frac{\mathbf{q}}{\|\mathbf{q}\|},
    \frac{1}{m_i}\!\sum_{j=1}^{m_i}\! \frac{\mathbf{g}_i^j}{\|\mathbf{g}_i^j\|}
    \right\rangle.
\end{align}
Methods such as EVM~\cite{gunther2017toward} and CAFace~\cite{kim2022cluster} follow this principle, differing only in aggregation details. 
However, all collapse intra-subject variation into a single prototype, discarding the feature distribution’s structure.

Alternatively, one can average per-sample similarities:
\begin{align}
    \mathbf{s}_i' = 
    \frac{1}{m_i}\!\sum_{j=1}^{m_i}\!
    \left\langle 
    \frac{\mathbf{q}}{\|\mathbf{q}\|},  
    \frac{\mathbf{g}_i^j}{\|\mathbf{g}_i^j\|}
    \right\rangle,
\end{align}
which retains all samples but treats them equally, ignoring cross-sample correlations or local substructure.

Existing scoring functions compute \(f\) independently for each subject, ignoring the gallery distribution. 
Specifically, conventional similarities assume that a subject’s features form a single cohesive cluster, whereas real-world data often contain multiple distinct regions corresponding to different viewpoints, illuminations, or other conditions. 
As a result, probes near sparse areas may appear close to the global mean but misaligned with any true region, leading to false acceptances or rejections. 
Neglecting this local structure limits the reliability of traditional scoring functions.

\subsection{\paper}

\begin{figure}
\renewcommand{\figurename}{Algorithm}

    \centering
    \begin{lstlisting}[style=modern, language=Python]
def local_score(model, x, G_S, G_M, k):
    """
    Parameters:
    - model: Pretrained biometric model
    - x: Probe input
    - G_S: (N, D) Per-subject gallery database
    - G_M: (N_m, D) Per-media gallery database
    - k: The "k" in KNN

    Returns:
    - (D,) updated score vector
    """

    # (N_g, ); similarity scores with G_S
    sim_scores = sim(model(x), G_S)

    # (1,); the k-th nearest neighbor in G_M
    knn_score = sim(model(x), G_M).sort()[:, -k]

    # (N_g, ); output similarity scores
    sim_scores += knn_score * (sim_scores == sim_scores.max())

    return sim_scores
\end{lstlisting}
    \vspace{-0.9em}
    \captionsetup{labelformat=empty} 
    \caption{Algorithm \thealgo. Python-style pseudo-code for \paper.}
    \label{alg:main_alg} 
    
\end{figure}
\setcounter{figure}{1}

\begin{figure*}
    \centering
    \includegraphics[width=0.9\linewidth]{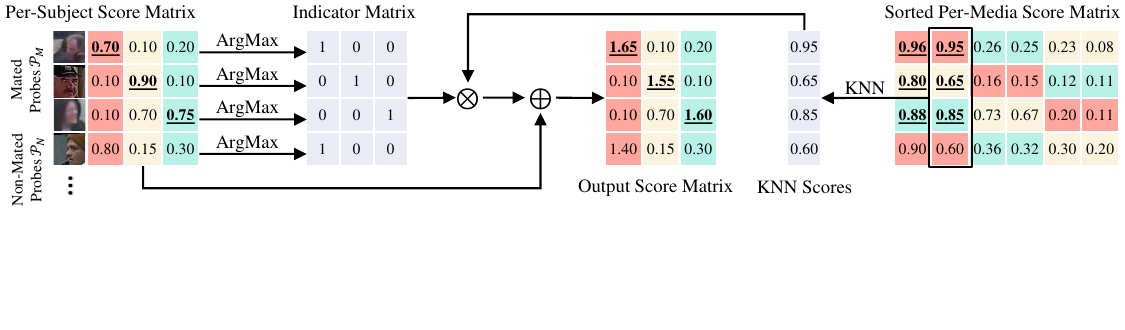}
    \vspace{-0.7in}
    \caption{A toy example of \paper, where genuine scores are bolded and underscored. Different colors represent different gallery subjects. The \(k\)-NN scores are calculated from the per-media score matrix for each probe and added to the maximum per-subject score to generate the output score matrix. For instance, for the first probe, a \(k\)-NN score of $0.95$ is added to the highest per-subject score of $0.7$, while all other scores remain unchanged. In the original score matrix, only one genuine score exceeds the maximum-per-probe non-mated score of $0.8$. However, after applying \paper, all genuine scores surpass the updated maximum-per-probe non-mated score of $1.4$.}
    \label{fig:toy_example}
\end{figure*}

Since gallery features naturally reflect intra-subject variation, we incorporate all samples in \(\mathcal{G}\) during scoring to capture the feature patterns within each subject.
Instead of averaging gallery features or pairwise similarities across all \(m_i\) samples, we focus on the local geometry of each subject’s feature space. 
We find the $k$-th nearest neighbor of the probe \(q\) among the gallery samples and calculate
\begin{align}
    {s}^{(k)} = 
    \operatorname*{top\text{-}k}_{i,j}
    \left(
    \left\langle 
    \frac{\mathbf{q}}{\left\lVert \mathbf{q} \right\rVert},
    \frac{\mathbf{g}_i^j}{\left\lVert \mathbf{g}_i^j \right\rVert}
    \right\rangle
    \right),
\end{align}
where \(\operatorname*{top\text{-}k}(\cdot)\) returns the $k$-th largest similarity and \(\mathbf{g}_i^j\) denotes the \(j\)-th feature of subject \(i\). 
This measures how closely the probe aligns with the most relevant local region of the gallery distribution. 
Genuine probes tend to yield larger \({s}^{(k)}\) as they cluster within dense regions of the correct subject’s gallery, while imposters tend to lie outside these modes and thus score lower.

To incorporate local geometry while preserving identity information, we fuse the $k$-NN score with the standard per-subject score \(\mathbf{s}\).
A na\"{i}ve averaging between \(\mathbf{s}_i\) and \(\mathbf{s}^{(k)}\) degrades performance because \(\mathbf{s}^{(k)}\) lacks explicit identity supervision, inflating non-match scores and narrowing the genuine–imposter margin.
For instance, we find that this na\"{i}ve fusion strategy reduces TAR@0.1\%FAR of KPRPE~\cite{kim2024keypoint} on the BRIAR dataset~\cite{cornett2023expanding} from 60\% to 30\%. 
Instead, we selectively apply the $k$-NN adjustment only to the top-scoring subject for each probe.

For instance, as illustrated by the third mated probe in \cref{fig:toy_example}, the similarity to the first subject after na\"{i}ve mean fusion is $(0.70 + 0.85)/2 = 0.775$. This value equals the similarity between the second probe and its true match (the second gallery subject), $(0.90 + 0.65)/2 = 0.775$. In this example, the first score is a non-match score, while the second is a match score. Thus, the verification performance suffers because the gap between match and non-match scores vanishes, causing fewer probes to be counted as true positives. Therefore, a more sophisticated algorithm is necessary.

To resolve this issue, let \(\mathbf{i} \in \{0,1\}^N\) be an indicator vector where \(\mathbf{i}_i = 1\) if \(\mathbf{s}_i = \max_j \mathbf{s}_j\) and \(0\) otherwise. 
The final fused scores \(\mathbf{t} \in \mathbb{R}^N\) are computed as:
\begin{align}
    \mathbf{t} = \mathbf{s} + \mathbf{i} \cdot \mathbf{s}^{(k)}.
\end{align}
This selective fusion increases the similarity only for the most probable subject, thereby avoiding global score inflation. \paper is presented in Alg.~\hyperref[alg:main_alg]{1}.
Note that it only capitalizes on the local gallery feature distribution and does not use the label of the gallery samples.

Compared to the na\"{i}ve fusion strategy, \paper increases the similarity scores of only the highest-scoring subject for each probe to avoid the broad inflation of non-match scores. In the toy example shown in \cref{fig:toy_example}, this selective increment keeps the similarity between the third probe and the second gallery subject at \(0.70\), which is considerably lower than any true match score. As a result, the non-match score does not overlap with the genuine matches. By not inflating the scores of non-top-ranked subjects, \paper preserves a wider separation between impostor and genuine distributions, improving verification performance.

\paper is training-free, efficient, and fully compatible with existing biometric models. 
Instead of assuming that a subject’s gallery features form a single cohesive cluster, it models their local structure, enabling improved verification and open-set recognition without any changes to network architecture or training objectives.

\subsection{Gallery Clustering}

While incorporating all gallery samples enhances robustness by capturing diverse intra-subject variations, it also increases memory and computation for subjects with many samples. To mitigate this, we optionally cluster each subject’s gallery features into a compact set of representative prototypes, preserving local structure while reducing cost.

For the \(i\)-th subject with gallery features 
\(\{\mathbf{g}_i^1, \ldots, \mathbf{g}_i^{m_i}\}\), 
we cluster them into \(C_i \ll m_i\) centers 
\(\{\mathbf{c}_i^1, \ldots, \mathbf{c}_i^{C_i}\}\), where each center is
\begin{align}
    \mathbf{c}_i^k = \frac{1}{|\mathcal{C}_i^k|} 
    \sum_{\mathbf{g}_i^j \in \mathcal{C}_i^k} \mathbf{g}_i^j,
\end{align}
and \(\mathcal{C}_i^k\) is the set of features in the \(k\)-th cluster. 
The resulting compact gallery is 
\(\tilde{\mathcal{G}} = \{\mathbf{c}_i^k\}_{i,k}\).

During inference, all similarity computations, including $k$-NN search and fusion, operate on cluster centers. The per-subject $k$-NN score becomes
\begin{align}
    \mathbf{s}_i^{(k)} = 
    \operatorname*{top\text{-}k}_{j}
    \left(
    \left\langle 
    \frac{\mathbf{q}}{\left\lVert \mathbf{q} \right\rVert},
    \frac{\mathbf{c}_i^j}{\left\lVert \mathbf{c}_i^j \right\rVert}
    \right\rangle
    \right),
\end{align}
where each \(\mathbf{c}_i^j\) captures a local mode of subject \(i\)’s distribution. This compact representation replaces redundant samples with informative prototypes, achieving a favorable balance between accuracy and efficiency.

\subsection{{Theoretical Analyses}}

We establish a theoretical framework to define the boundaries of applicability for \paper to provide bounds for scenarios where it enhances verification and open-set search. Specifically, \cref{thm:main} offers a criterion to assess whether the \(k\)-NN scores provide sufficient separation between mated and non-mated scores, resulting in improvements in both FNIR@FPIR and TAR@FAR.

Suppose the per-subject genuine scores are normally distributed as $\mathcal{N}(\mu_1, \sigma_1)$.
The per-subject imposter scores are distributed as $\mathcal{N}(\mu_2, \sigma_2)$. 
The \(k\)-NN scores for mated probes are distributed as $\mathcal{N}(\mu_3, \sigma_3)$, and the \(k\)-NN scores for non-mated probes are distributed as $\mathcal{N}(\mu_4, \sigma_4)$. 
Also, let $N_1$ be the number of mated probes, $N_2$ be the number of non-mated probes, $M$ be the number of gallery subjects, \(N_3\) be the total number of imposter scores, and $r_1$ and \(r_2\) be the tolerable number of maximum-per-probe non-mated scores (\eg, \(1\% \times N_2\)) and non-match scores (\eg, $0.1\% \times N_3$).

\begin{theorem}
    \label{thm:main}
    
    Alg.~\hyperref[alg:main_alg]{1} improves the expected open-set FNIR@FPIR at the expected threshold if
    \begin{align}
        \frac{\mu_2 + \mu_4 + \delta \sqrt{\sigma_2^2 + \sigma_4^2} - (\mu_1 + \mu_3)}{\sqrt{\sigma_1^2 + \sigma_3^2}} < \frac{(\mu_2 + \delta \sigma_2) - \mu_1}{\sigma_1}, \label{eq:open_set_theory}
    \end{align}
    where \(\delta = -\ln (-\ln(1 - r_1 / N_2))\) and improves verification TAR@FAR at the expected threshold if
        \begin{multline} \frac{\mu_2 + \mu_4 + \Phi^{-1}\left(\frac{N_3 - r_2 - \alpha}{N_3 - 2\alpha + 1}\right) \sqrt{\sigma_2^2 + \sigma_4^2} - (\mu_1 + \mu_3)}{\sqrt{\sigma_1^2 + \sigma_3^2}} \\ < \frac{\mu_2 + \Phi^{-1}\left(\frac{N_3 - r_2 - \alpha}{N_3 - 2\alpha + 1}\right) \sigma_2 - \mu_1}{\sigma_1}, \label{eq:verification_theory}  \end{multline}
    where $\Phi^{-1}(\cdot)$ is the inverse of the cumulative distribution function of normal distribution, and $\alpha=\pi/8$ is a constant.
\end{theorem}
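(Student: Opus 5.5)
The argument treats both criteria the same way. We find the threshold that fixes the false-positive rate, and we express the true-positive rate as a normal tail probability. Before and after Alg.~\hyperref[alg:main_alg]{1}, the score that decides acceptance of a mated probe is then a single Gaussian. We use the modelling simplification that each mated probe's top-ranked subject is its genuine subject, and each non-mated probe's top-ranked subject gets the $k$-NN increment. Under this simplification, a genuine score is $\mathcal{N}(\mu_1,\sigma_1)$ before fusion and $\mathcal{N}(\mu_1+\mu_3,\sqrt{\sigma_1^2+\sigma_3^2})$ after, treating the per-subject and $k$-NN scores as independent. The relevant imposter score is likewise $\mathcal{N}(\mu_2,\sigma_2)$ before and $\mathcal{N}(\mu_2+\mu_4,\sqrt{\sigma_2^2+\sigma_4^2})$ after. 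For either metric, if $\tau$ is the expected threshold, the TPR is $1-\Phi\bigl((\tau-\mu)/\sigma\bigr)$. Since $\Phi$ is monotone, the method improves the metric exactly when the standardized threshold after fusion is smaller than before. This is the shape of both \eqref{eq:open_set_theory} and \eqref{eq:verification_theory}, so what remains is to compute each standardized threshold.

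\textbf{Open set.} The FPIR threshold is the value exceeded by only $r_1$ of the $N_2$ maximum-per-probe non-mated scores. Each such score is a maximum over many Gaussian imposter scores, so we approximate it with a Gumbel law. Its location and scale are matched so that the threshold is the imposter mean plus $\delta$ imposter standard deviations. We then solve $\exp(-e^{-\delta})=1-r_1/N_2$, which gives $\delta=-\ln(-\ln(1-r_1/N_2))$. Before fusion the threshold is $\mu_2+\delta\sigma_2$; after fusion it is $\mu_2+\mu_4+\delta\sqrt{\sigma_2^2+\sigma_4^2}$. Standardizing each by the corresponding genuine distribution and comparing yields \eqref{eq:open_set_theory}, so FNIR at this threshold decreases.

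\textbf{Verification.} The FAR threshold is the $(N_3-r_2)$-th order statistic of the $N_3$ imposter scores. We approximate its expectation with Blom's formula, $\mathbb{E}[X_{(i)}]\approx\mu+\sigma\,\Phi^{-1}\bigl(\frac{i-\alpha}{n-2\alpha+1}\bigr)$ with $\alpha=\pi/8$, taking $i=N_3-r_2$ and $n=N_3$. Substituting the before and after imposter parameters gives the two thresholds. Standardizing and comparing them gives \eqref{eq:verification_theory}.

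\textbf{Main obstacle.} The hard part is justifying which scores the increment actually hits. The increment goes to the argmax subject, which affects the mated/non-mated maxima differently from a generic imposter score. In the verification case, non-top imposter scores are not incremented at all, so the all-imposters-shifted model above is a conservative approximation. I would state these as modelling assumptions and argue that they only make the sufficient condition stricter, consistent with the theorem speaking only of expected thresholds. A second caveat is that the Gumbel location and scale match is asymptotic; I would present it as the expected-threshold approximation rather than an exact result.
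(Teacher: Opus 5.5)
Your route is the paper's: a Gumbel quantile whose location and scale are matched to the imposter mean and standard deviation (giving $\delta$), sums of independent Gaussians for the fused scores, the Elfving/Blom expected order statistic with $\alpha=\pi/8$ for the FAR threshold, and a comparison of standardized thresholds using the monotonicity of $\Phi$. Your orientation, $\mathrm{TPR}=1-\Phi\bigl((\tau-\mu)/\sigma\bigr)$, is the consistent one. The paper writes $\mathrm{FNIR}=1-\Phi(\cdot)$, which, read literally, would reverse the direction of its conclusion.

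The step that would not go through is your plan to argue that the modelling simplifications only make the sufficient condition stricter. On the imposter side this is right, provided the $k$-NN increments are nonnegative. Shifting every imposter score then dominates, pointwise, shifting only the top-ranked ones, so the modelled FAR threshold is at least the true one. On the genuine side, however, assuming every mate is top-ranked (so every genuine score is boosted) is optimistic, not conservative. In verification, a genuine pair whose subject is not the probe's argmax keeps its raw score while the threshold rises by roughly $\mu_4$. The true post-fusion TAR can therefore fall below the modelled value. Since the two approximation errors have no fixed net sign, \eqref{eq:verification_theory} is not guaranteed to be sufficient. For open-set FNIR with the usual rank-1 requirement this particular problem disappears. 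A rank-1 miss is a miss both before and after fusion, because the ranking is unchanged, and for rank-1 hits the genuine score is exactly the one that gets boosted. So state these as modelling assumptions, as the paper does, rather than claiming they give a conservative bound. The paper explicitly assumes the $k$-NN score is added to all match scores, and to the non-match scores above the threshold.
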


For a proof of \cref{thm:main}, please refer to the supplementary.

%% file: sec/4_experiments.tex
\section{Experiments}
\label{sec:experiments}

\paragraph{Evaluation Protocols.}
We follow the evaluation protocols (train \vs test, gallery \vs probe, and mated \vs non-mated splits) of~\cite{su2024open} whenever possible. For newer datasets, we randomly generate evaluation protocols in the same way.

\paragraph{Evaluation Metrics.}
We evaluate biometric models from two aspects: verification and open-set retrieval. For verification, we report TAR@FAR at both \(0.1\%\) and \(1\%\) thresholds. 
For open-set search, we present the FNIR@FPIR at \(0.1\%\) and \(1\%\) thresholds for larger datasets and \(1\%\) and \(5\%\) for smaller ones. Given the inherent variability in FNIR@FPIR~\cite{su2024open}, we provide their means and 95\% confidence intervals from the results of \(50\) runs. 

\paragraph{Datasets.}
In addition to IJB-S, Gait3D, and CCVID~\cite{su2024open}, we evaluate on CCPG~\cite{li2023depth} for RGB-based gait recognition and BRIAR~\cite{cornett2023expanding} for multi-modal fusion.
CCPG targets clothing-changing gait recognition and person reID. It contains \num{100} training and \num{100} test subjects with \num{8388} and \num{8178} sequences, respectively. 
CCPG provides RGB images with faces and shoes grayed out, enabling RGB-based gait recognition methods like BigGait~\cite{ye2024biggait}, which outperform silhouette-based approaches.
BRIAR focuses on long-range and UAV-based biometrics across modalities including face, gait, and person reID. 
Its training set has \num{995} subjects, and we follow official protocol v6.1.0 with \num{1280} gallery subjects and \num{6813} probe sequences to evaluate \paper’s performance in whole-body human recognition.

\subsection{Ablation Studies}

\paragraph{Effect of \(k\).}
We analyze the sensitivity of verification and open-set recognition performance to \( k \) across three modalities: face recognition on BRIAR~\cite{cornett2023expanding} with IJB-S~\cite{kalka2018ijb}, gait recognition on Gait3D~\cite{zheng2022gait} with SwinGait~\cite{fan2023exploring}, and person reID on CCVID with CAL~\cite{gu2022clothes}. 
\cref{fig:ablation} illustrates the summed verification and open-set search performance. 
The open-set performance remains robust across different \(k\) values, indicating that \paper generalizes well. For verification, the performance degrades as $k$ increases. 
In general, smaller \( k \) values yield better performance, particularly for datasets with smaller gallery sizes (\eg, Gait3D with only 5.4 sequences per subject on average). 
Larger values of \( k \) tend to degrade verification performance due to an increased chance of including neighbors of incorrect gallery identities for genuine probes.
Thus, we choose \(k\)=1 in our experiments.

\begin{figure}
\centering
\begin{minipage}[t]{.47\linewidth}
  \centering
  \includegraphics[width=\linewidth]{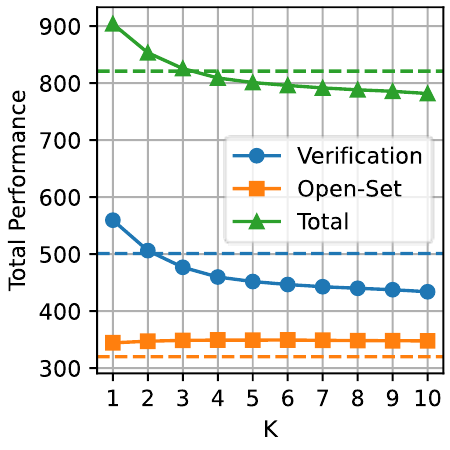}
  \captionof{figure}{The effect of \(k\) on the overall performance. The total performance is the sum of TPIR@FPIR (\ie, \(1 - \)FNIR@FPIR), TAR@FAR, and rank-1 accuracy. Dashed lines: without \paper.}
  \label{fig:ablation}
\end{minipage}\hfill%
\begin{minipage}[t]{.47\linewidth}
  \centering
  \includegraphics[width=\linewidth]{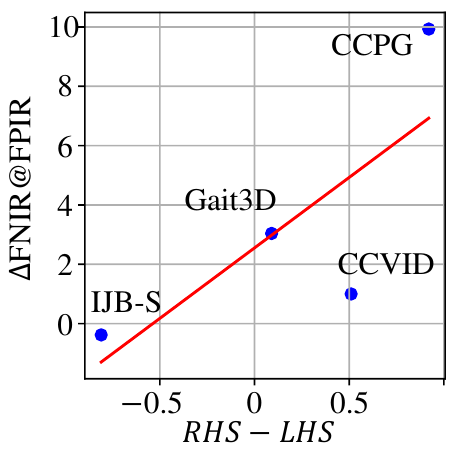}
  \captionof{figure}{The relationship between performance gains from \paper on various datasets and the theoretical difference between the right-hand and the left-hand side of \cref{eq:open_set_theory}. Red line: linear regression.}
  \label{fig:knn_perf_w_theory}
\end{minipage}
\end{figure}

\paragraph{Effects of Gallery Clustering.}
To examine the trade-off between storage and performance, we evaluate clustering of each subject’s gallery prior to scoring. Specifically, the gallery is clustered into \(C_i\) clusters using $k$-means, and each cluster is represented by its centroid. 

As shown in \cref{tab:gallery_clustering}, open-set FNIR@FPIR improves as \(C_i\) increases. This indicates that broader coverage of a subject’s  intra-subject variation reduces false rejections in open-set scenarios. In contrast, verification (TAR@FAR) remains stable across cluster sizes, suggesting that only a few centroids suffice for genuine match identification.

Sensitivity to clustering varies by dataset. Gait3D~\cite{zheng2022gait} has few, highly variable samples per subject (avg.\ $\approx5$), so clustering offers limited gains. In contrast, CCPG~\cite{li2023depth}, with more samples per subject (avg.\ $\approx35$) under controlled conditions, achieves near-full performance using under 50\% of samples. Thus, redundant gallery features can be compressed with minimal accuracy loss.

In subsequent experiments, we use the full gallery to report the best performance.

\subsection{Results}

\paragraph{Comparison with SoTA.}
We evaluate \paper using SoTA models across diverse biometric modalities, datasets, and gallery aggregation methods, including CAFace~\cite{kim2022cluster} for face and EVM~\cite{gunther2017toward}, which fits Weibull distributions to gallery features. As shown in \cref{tab:all_results}, \paper consistently improves over the original scores across all settings. On CCPG, for instance, it enhances BigGait from \(82.46\%\) to \(92.95\%\) (by \(10.49\%\)) on TAR@0.1\%FAR and from \(46.95\%\) to \(37.04\) (by \(9.93\%\)) on FNIR@0.1\%FPIR, without degrading performance for any model. When combined with EVM, \paper further boosts verification accuracy, \eg, improving TAR@0.1\%FAR on Gait3D from \(69.68\%\) to \(79.84\%\) (by \(10.16\%\)), while maintaining low inference cost and requiring only a single hyperparameter.

\begin{table}
    \centering\small
    \resizebox{\linewidth}{!}{
    \begin{NiceTabular}{lcccccccc}
        \toprule
        \Block{2-1}{Model} & \Block{2-1}{Dataset} & \Block{2-1}{\(C_i\)} & \Block{2-1}{\% Gal.} & \Block{2-1}{Run-\\time} & \Block{1-2}{FNIR@FPIR $\downarrow$} & & \Block{1-2}{TAR@FAR $\uparrow$} & \\ \cmidrule(lr){6-7} \cmidrule(lr){8-9}
        & & & & & 0.1\% & 1\% & 0.1\% & 1\% \\ \midrule 
        \Block{4-1}{SwinGait \\ \cite{fan2023exploring}} & \Block{4-1}{Gait3D\textsuperscript{\textdagger}} & N/A & \(<1\) & \(124\) & \(81.84\) & \(66.09\) & \(57.81\) & \(78.95\) \\
        & & \(5\) & \(53.7\) & \(246\) & \(83.50\) & \(69.10\) & \(\mathbf{63.65}\) & \({78.95}\) \\
        & & \(20\) & \(92.3\) & \(319\) & \({81.01}\) & \({65.43}\) & \(\mathbf{63.65}\) & \(\mathbf{79.08}\) \\
        & & \(\infty\) & \(100\) & \(338\) & \(\mathbf{78.80}\) & \(\mathbf{62.98}\) & \(\mathbf{63.65}\) & \(\mathbf{79.08}\) \\ \midrule
        \Block{4-1}{BigGait \\ \cite{ye2024biggait}} & \Block{4-1}{CCPG \\ \cite{li2023depth}} & N/A & \(<1\) & \(29\) & \(46.95\) & \(34.53\) & \(82.46\) & \(95.64\) \\
        & & \(5\) & \(14.2\) & \(60\) & \({41.04}\) & \({29.52}\) & \({92.25}\) & \({97.00}\) \\
        & & \(20\) & \(56.8\) & \(175\) & \({39.23}\) & \({27.34}\) & \({92.89}\) & \({97.00}\) \\
        & & \(\infty\) & \(100\) & \(301\) & \(\mathbf{37.04}\) & \(\mathbf{25.34}\) & \(\mathbf{92.95}\) & \(\mathbf{97.01}\) \\
        \bottomrule
    \end{NiceTabular}
    }
    \caption{Open-set FNIR@FPIR and verification TAR@FAR under different \(C_i\).
    \% Gal.: the ratio of the size of the clustered gallery to that of the full gallery.
    Runtime is reported in milliseconds (ms).
    \textsuperscript{\textdagger}FNIR@FPIR reported at 1\% and 5\% thresholds.}
    \label{tab:gallery_clustering}
\end{table}

\begin{table*}
    \centering\small
    \resizebox{0.75\linewidth}{!}{
    \begin{NiceTabular}{lcccccccc}
        \toprule
        \Block{2-1}{Modality} & \Block{2-1}{Model} & \Block{2-1}{Dataset} & \Block{2-1}{Gallery} & \Block{2-1}{Scoring} & \Block{1-2}{FNIR@FPIR $\downarrow$} & & \Block{1-2}{TAR@FAR $\uparrow$} \\ \cmidrule(lr){6-7} \cmidrule(lr){8-9} 
        &  & & & & 1\% & 5\% & 0.1\% & 1\% \\ \midrule 
        \Block{6-1}{Face} & \Block{6-1}{KPRPE \\ \cite{kim2024keypoint}} & \Block{6-1}{IJB-S \\ \cite{kalka2018ijb}} & \Block{3-1}{Avg} & N/A & \(\mathbf{94.11}\) & \({88.64}\) & \(10.69\) & \(25.47\) \\
        & & & & Ours* & \(94.17\) & \(\mathbf{88.00}\) & \(11.33\) & \(25.91\) \\
        & & & & Ours & \(94.49\) & \(89.12\) & \(\mathbf{21.76}\) & \(\mathbf{45.47}\) \\ \cmidrule{4-9}
        & & & \Block{3-1}{EVM \\ \cite{gunther2017toward}} & N/A & \(\mathbf{88.20}\) & \(\mathbf{83.43}\) & \(18.93\) & \(31.83\) \\
        & & & & Ours* & \({94.35}\) & \({83.94}\) & \({17.76}\) & \({31.28}\) \\
        & & & & Ours & \({94.08}\) & \({84.63}\) & \(\mathbf{22.93}\) & \(\mathbf{36.83}\) \\
        \midrule
        \Block{12-1}{Gait} & \Block{6-1}{SwinGait \\ \cite{fan2023exploring}} & \Block{6-1}{Gait3D \\ \cite{zheng2022gait}} & \Block{3-1}{Avg} & N/A & \(81.84 \smallpm 1.76\) & \(66.09 \smallpm 0.69\) & \(57.81\) & \(78.95\) \\
        & & & & Ours* & \(\mathbf{78.59 \smallpm 2.04}\) & \(\mathbf{61.76 \smallpm 0.94}\) & \(\mathbf{65.62}\) & \(\mathbf{86.47}\) \\
        & & & & Ours & \({78.80 \smallpm 1.99}\) & \({62.98 \smallpm 0.78}\) & \({63.65}\) & \({79.08}\) \\ \cmidrule{4-9}
        & & & \Block{3-1}{EVM \\ \cite{gunther2017toward}} & N/A & \(83.63 \smallpm 1.54\) & \(62.40 \smallpm 1.26\) & \(69.68\) & \(88.61\) \\
        & & & & Ours* & \(\mathbf{83.36 \smallpm 1.55}\) & \(\mathbf{61.95 \smallpm 1.23}\) & \({69.89}\) & \(\mathbf{92.20}\) \\
        & & & & Ours & \({83.38 \smallpm 1.53}\) & \({62.19 \smallpm 1.21}\) & \(\mathbf{79.84}\) & \({88.72}\) \\
        \cmidrule{2-9}
        & \Block{6-1}{BigGait \\ \cite{ye2024biggait}} & \Block{6-1}{CCPG\textsuperscript{\textdagger} \\ \cite{li2023depth}} & \Block{3-1}{Avg} & N/A & \(46.95 \smallpm 2.02\) & \(34.53 \smallpm 1.96\) & \(82.46\) & \(95.64\) \\
        & & & & Ours* & \(\mathbf{36.39 \smallpm 2.77}\) & \(\mathbf{24.10 \smallpm 2.12}\) & 87.86 & 97.04 \\
        & & & & Ours & \({37.04 \smallpm 2.71}\) & \({25.34 \smallpm 1.99}\) & \(\mathbf{92.95}\) & \(\mathbf{97.01}\) \\ \cmidrule{4-9}
        & & & \Block{3-1}{EVM \\ \cite{gunther2017toward}} & N/A &  \(46.64 \smallpm 3.93\) & \(34.94 \smallpm 3.41\) & \(86.25\) & \(97.37\) \\
        & & & & Ours* & \(\mathbf{41.93 \smallpm 4.14}\) & \(\mathbf{30.23 \smallpm 3.23}\) & \({87.13}\) & \({97.10}\) \\
        & & & & Ours & \({41.99 \smallpm 4.13}\) & \({30.41 \smallpm 3.21}\) & \(\mathbf{93.29}\) & \(\mathbf{97.78}\) \\ 
        \midrule
        \Block{6-1}{ReID} &\Block{6-1}{CAL \\ \cite{gu2022clothes}} & \Block{5-1}{CCVID \\ \cite{gu2022clothes}} &  \Block{3-1}{Avg} & N/A & \(39.98 \smallpm 0.88\) & \(34.38 \smallpm 0.91\) & \(68.19\) & \(81.81\) \\
        & & & & Ours* & \(\mathbf{38.66 \smallpm 0.85}\) & \(\mathbf{33.31 \smallpm 0.93}\) & 69.24 & 82.97 \\
        & & & & Ours & \({38.98 \smallpm 0.77}\) & \(34.19 \smallpm 0.84\) & \(\mathbf{75.23}\) & \(\mathbf{85.34}\) \\ \cmidrule{4-9}
        & & & \Block{3-1}{EVM \\ \cite{gunther2017toward}} & N/A & \(37.48 \smallpm 2.32\) & \(\mathbf{28.72 \smallpm 1.05}\) & \(75.21\) & \(86.12\) \\
        & & & & Ours* & \(\mathbf{37.30 \smallpm 2.06}\) & \({30.04 \smallpm 0.88}\) & \({72.53}\) & \({82.99}\) \\
        & & & & Ours & \(37.48 \smallpm 1.99\) & \(30.98 \smallpm 0.82\) & \(\mathbf{78.00}\) & \(\mathbf{86.26}\) \\ 
        \bottomrule
    \end{NiceTabular}
    }
    \caption{Open-set FNIR@FPIR and verification TAR@FAR on real-world biometric datasets. 95\% confidence intervals are reported for FNIR@FPIR. Ours* denotes \paper under the 1:1 setting. \textsuperscript{\textdagger}FNIR@FPIR reported at 0.1\% and 1\% thresholds. }
    \label{tab:all_results}
\end{table*}

\paragraph{Theoretical Predictions.}
\cref{thm:main} provides a theoretical tool to predict the real-world performance of \paper. To validate the accuracy of the theoretical predictions, we compare performance gains from \paper to the difference between the right-hand side (RHS) and the left-hand side (LHS) of \cref{eq:open_set_theory} in \cref{fig:knn_perf_w_theory}, averaged over multiple gallery sets. The strong correlation (Pearson coefficient 0.77) and large positive slope (4.75) confirm that increases in this theoretical difference align with real-world performance improvements, demonstrating \cref{thm:main}’s predictive power despite its assumptions on score distributions.

\begin{table}
    \centering\small
    \resizebox{0.9\linewidth}{!}{
    \begin{NiceTabular}{lcccccc}
        \toprule
        \Block{2-1}{Task} & \Block{2-1}{Model} & \Block{2-1}{Ours} & \Block{1-2}{FNIR@FPIR $\downarrow$} & & \Block{1-2}{TAR@FAR $\uparrow$} & \\ 
        \cmidrule(lr){4-5} \cmidrule(lr){6-7} 
        & & & CCVID & BRIAR & CCVID & BRIAR \\ \midrule 
        \Block{2-1}{Face} & \Block{2-1}{KPRPE \\ \cite{kim2024keypoint}} & \ding{55} & \(12.99\) & \(93.6\) & \(91.12\) & \(14.7\) \\
        & & \ding{51} & \(\mathbf{10.67}\) & \(\mathbf{92.9}\) & \(\mathbf{93.88}\) & \(\mathbf{17.3}\) \\ \midrule
        \Block{2-1}{Gait} & \Block{2-1}{BigGait \\ \cite{ye2024biggait}}  & \ding{55} & \(71.05\) & \(84.3\) & \(50.72\) & \({35.3}\) \\
        & & \ding{51} & \(\mathbf{59.92}\) & \(\mathbf{84.0}\) & \(\mathbf{73.98}\) & \(\mathbf{37.7}\) \\ \midrule
        \Block{2-1}{ReID} & \Block{2-1}{CLIP3D-\\ReID~ \cite{liu2024distilling}} & \ding{55} & \(52.77\) & \(\mathbf{86.8}\) & \(62.11\) & \(28.2\) \\
        & & \ding{51} & \(\mathbf{40.49}\) & \({87.2}\) & \(\mathbf{80.46}\) & \(\mathbf{32.2}\)\\ \midrule
        \Block{2-1}{Whole\\ Body} & \Block{2-1}{FarSight \\ \cite{liu2024farsight}} & \ding{55} & \(15.41\) & \(76.3\) & \(87.41\) & \(39.2\) \\
        & & \ding{51} & \(\mathbf{10.74}\) & \(\mathbf{73.4}\) & \(\mathbf{91.73}\) & \(\mathbf{53.5}\) \\
        \bottomrule
    \end{NiceTabular}
    }
    \caption{Open-set FNIR@FPIR, verification TAR@FAR on the BRIAR~\cite{cornett2023expanding} and CCVID~\cite{gu2022clothes} datasets.}
    \label{tab:briar_results}
\end{table}

\paragraph{1:1 Evaluation.}
Although standard deployments assume access to all enrolled subjects, certain practical scenarios restrict verification to a single available subject. To assess whether \paper remains effective under this constraint, we evaluate it in a 1:1 setting, where each decision involves a single probe and a single gallery subject. In this setup, our method enhances similarity scores using only the gallery samples of the active subject (see supplementary materials for an illustration). Results for this configuration are reported as Ours* in \cref{tab:all_results}.
Across modalities and datasets, \paper delivers strong verification and open-set performance even without access to the full gallery. Surprisingly, its open-set results often surpass those obtained when all subjects are available. For example, on CCPG~\cite{li2023depth} using BigGait~\cite{ye2024biggait}, \paper achieves \(36.29\%\) FNIR@0.1\%FPIR, compared to \(46.95\%\) without our method and \(37.04\%\) when using the entire gallery. For verification, while performance is generally lower than in the full-gallery setting, \paper still provides substantial gains over baseline scoring. Notably, on Gait3D~\cite{zheng2022gait} with SwinGait~\cite{fan2023exploring}, it improves TAR@0.1\%FAR from \(57.81\%\) to \(65.62\%\), whereas using the entire gallery yields only \(63.65\%\). These results demonstrate that \paper effectively models intra-subject variation and enables reliable verification even when only a single subject’s gallery is available.

\paragraph{Human Recognition.}
We evaluate \paper across four modalities in the multi-modal BRIAR benchmark: face, gait, person re-identification (reID), and whole-body recognition. Each modality uses its own feature extractor, and results are summarized in \cref{tab:briar_results}. \paper yields consistent gains in open-set performance, with especially strong improvements on CCVID. For instance, it reduces FNIR@1\%FPIR for reID from \(52.77\) to \(40.49\) and boosts TAR@0.1\%FAR for gait from \(50.72\%\) to \(73.98\%\). On the more challenging BRIAR~\cite{cornett2023expanding} dataset, \paper also improves performance in nearly all cases. These results demonstrate that \paper integrates seamlessly into real-world systems, enhancing not only individual models but also complex multi-modal human recognition pipelines.

\begin{table}
    \centering\small
    \resizebox{\linewidth}{!}{
    \begin{NiceTabular}{lccccccc}
        \toprule
        \Block{2-1}{Task} & \Block{2-1}{Model} & \Block{2-1}{Dataset} & \Block{2-1}{Method} & \Block{1-2}{FNIR@FPIR $\downarrow$} & & \Block{1-2}{TAR@FAR $\uparrow$} & \\ \cmidrule(lr){5-6} \cmidrule(lr){7-8} 
        & & & & 0.1\% & 1\% & 0.1\% & 1\% \\ \midrule 
        \Block{5-1}{Gait} & \Block{5-1}{BigGait \\ \cite{ye2024biggait}}  & \Block{5-1}{CCPG \\ \cite{li2023depth}} & Max & \(42.87\) & \(28.92\) & \(86.94\) & \(\mathbf{97.11}\) \\
        & & & Min & \(83.23\) & \(70.97\) & \(48.65\) & \(77.88\) \\
        & & & Mean & \(51.08\) & \(38.12\) & \(79.22\) & \(95.20\) \\
        & & & R-R & \(47.84\) & \(36.56\) & \(81.12\) & \(95.14\) \\
        & & & Ours & \(\mathbf{37.04}\) & \(\mathbf{25.34}\) & \(\mathbf{92.95}\) & \({97.01}\) \\ \midrule
        \Block{5-1}{ReID} & \Block{5-1}{CAL \\ \cite{gu2022clothes}} & \Block{5-1}{CCVID\textsuperscript{\textdagger} \\ \cite{gu2022clothes}} & Max & \(40.55\) & \(\mathbf{32.89}\) & \(69.65\) & \(82.83\) \\
        & & & Min & \(54.49\) & \(45.86\) & \(60.97\) & \(80.36\) \\
        & & & Mean & \(\mathbf{38.74}\) & \(34.38\) & \(69.16\) & \(82.87\) \\
        & & & R-R & \(40.52\) & \({34.12}\) & \(68.88\) & \(83.60\) \\
        & & & Ours & \({38.98}\) & \({34.19}\) & \(\mathbf{75.23}\) & \(\mathbf{85.34}\) \\
        \bottomrule
    \end{NiceTabular}
    }
    \caption{Open-set FNIR@FPIR and verification TAR@FAR of multi-sample methods. \textsuperscript{\textdagger}FNIR@FPIR reported at 1\% and 5\% thresholds. [R-R: Re-ranking~\cite{zhong2017re}]}
    \label{tab:multi-sample}
\end{table}

\paragraph{Comparison with Multi-Sample Methods.}
We compare our scoring function against standard multi-sample aggregation strategies that collapse multiple gallery samples into a single score. Specifically, we evaluate three methods, mean, max, and min~\cite{best2014unconstrained}. As shown in \cref{tab:multi-sample}, these approaches exhibit a consistent ordering: our method delivers the strongest performance, followed by max, with mean and min trailing behind. For instance, on CCPG~\cite{li2023depth}, our approach achieves the lowest FNIR (\(37.04\%\)@0.1\%FPIR), compared to $42.87\%$ for max, $51.08\%$ for mean, and $83.23\%$ for min. A similar pattern appears in verification performance, where our TAR@0.1\%FAR reaches $92.95\%$, surpassing max ($86.94\%$), mean ($79.22\%$), and min ($48.65\%$). Overall, incorporating local geometric information into the scoring function yields more reliable verification under both open- and closed-set conditions than pooling-based methods that ignore intra-subject variation.

\paragraph{Comparison with Re-Ranking.}
For fair comparison, we evaluate \paper against a per-probe variant of the re-ranking algorithm~\cite{zhong2017re}, adapted to operate on each probe independently. Classical re-ranking leverages reciprocal neighbors—\ie, mutual nearest-neighbor relationships within a set of queries—and thus typically requires multiple probes to be effective. As shown in \cref{tab:multi-sample}, \paper consistently surpasses per-probe re-ranking by a substantial margin, underscoring its suitability for real-world biometric scenarios where probes often arrive individually.

%% file: sec/5_analyses.tex
\section{Monte Carlo Simulations}
\label{sec:analyses}

\subsection{Analysis on \cref{thm:main}}

We perform Monte Carlo simulations to assess the practical validity of \cref{thm:main} under both its theoretical assumptions and more general conditions. Specifically, we generate random score matrices consistent with \cref{thm:main}’s assumptions, where match, non-match, mated KNN, and non-mated scores follow normal distributions. We then compute \(\mu_3^*\), the value that equates the two sides of \cref{eq:open_set_theory}. Keeping all other distribution parameters fixed, we vary \(\mu_3\) and observe in \cref{fig:mc_sim_theorem_2_score_matrix_open_set} that performance decreases when \(\mu_3 < \mu_3^*\) and improves when \(\mu_3 > \mu_3^*\). This alignment between simulation and theory confirms \cref{thm:main}’s predictive validity for identifying when our method yields performance gains.

\begin{figure}
    \centering
    
    \begin{subfigure}{0.45\linewidth}
        \centering
        \includegraphics[width=\linewidth]{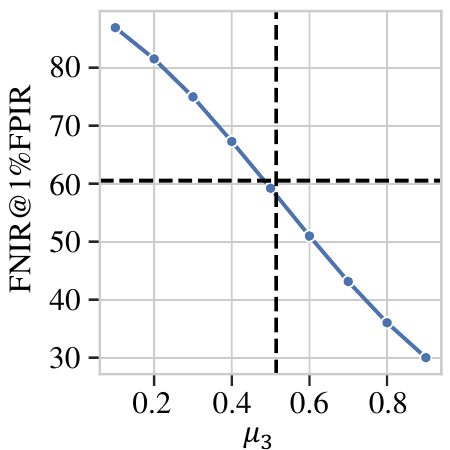}
        \caption{}
        \label{fig:mc_sim_theorem_2_score_matrix_open_set}
    \end{subfigure}
    \hfill
    \begin{subfigure}{0.45\linewidth}
        \centering
        \includegraphics[width=\linewidth]{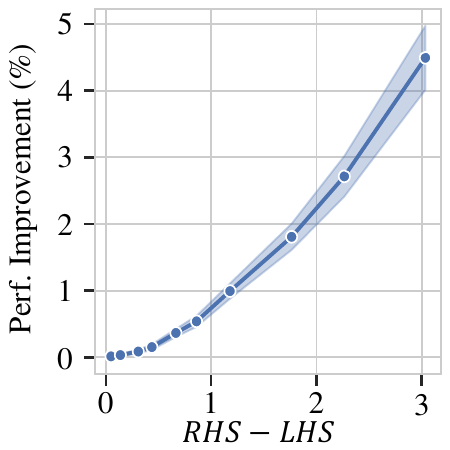}
        \caption{}
        \label{fig:mc_sim_theorem_2_features_open_set}
    \end{subfigure}

    \caption{(a) The effect of \(\mu_3\) on open-set FNIR@1\%FPIR. The vertical line indicates the theoretically predicted \(\mu_3^*\), and the horizontal line indicates the performance without our method. (b) The impact of the difference between the RHS and LHS of \cref{eq:open_set_theory} on the improvement of FNIR@FPIR by using our method. The shaded area is the \(99\%\) confidence interval.} 
    \label{fig:mc_sim_theorem_2_open_set}
\end{figure}

To account for deviations from our theoretical assumptions, we extend the simulations to randomly generated feature vectors and corresponding score matrices. We examine how performance gains relate to the numerical gap between the RHS and LHS of \cref{eq:open_set_theory}, where a larger gap indicates greater separation between mated and non-mated \(k\)-NN scores, and thus stronger performance improvements.

We create synthetic datasets containing both mated and non-mated samples to evaluate our method. Specifically, \(20\%\) of the \(N\) classes are designated as non-mated, containing only probe samples. For the \(i\)-th class, data are drawn from a multivariate Gaussian:
\begin{align}
    \mathcal{N}\!\left(\!\left[\cos\!\left(\tfrac{2\pi i}{N}\right), \sin\!\left(\tfrac{2\pi i}{N}\right)\right]^T, \sigma^2 \mathbf{I}\!\right),
\end{align}
yielding classes evenly distributed on the unit circle. Mated classes include \(M\) samples for both gallery and probe sets, while non-mated classes contain \(M\) probe samples only. Here, \(\sigma\) controls intra-class variation and \(N\) denotes the total number of classes. Each experiment is repeated \num{1000} times with different random seeds for robustness.

\cref{fig:mc_sim_theorem_2_features_open_set} demonstrates that \paper reliably separate mated from non-mated probes across varying \(\sigma\) and \(k\). The RHS consistently exceeds the LHS, which is consistent with observed open-set performance gains in most cases. Moreover, larger numerical differences correspond to greater improvements, indicating that despite its normality assumption, \cref{thm:main} generalizes well and effectively predicts when our method enhances FNIR@FPIR.

\subsection{Ablation Study}
\label{sec:analyses_empirical}

We ablate the hyperparameter \(k\) and the intra-class variation parameter \(\sigma\) using the same setup as in the previous subsection, reporting their effects on FNIR@1\%FPIR. Additional analyses on \cref{eq:verification_theory} and TAR@FAR are provided in the supplementary materials.

\begin{figure}
    \centering
    
    \begin{subfigure}{0.45\linewidth}
        \centering
        \includegraphics[width=\linewidth]{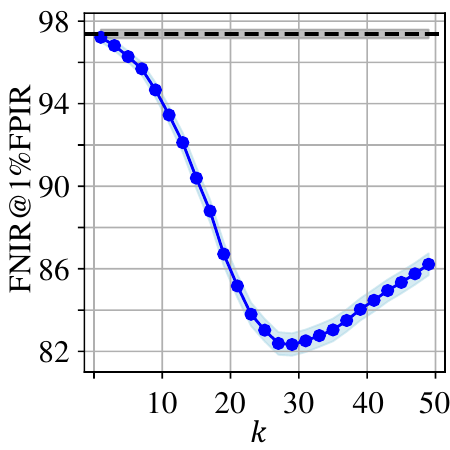}
        \caption{The effect of \(k\).}
        \label{fig:open_set_vs_k}
    \end{subfigure}
    \hfill
    \begin{subfigure}{0.45\linewidth}
        \centering
        \includegraphics[width=\linewidth]{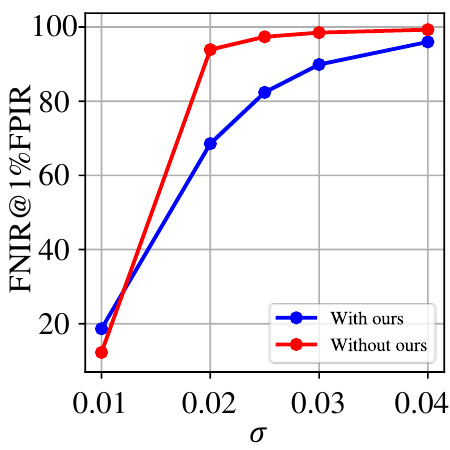}
        \caption{The effect of \(\sigma\).}
        \label{fig:open_set_vs_sigma_i}
    \end{subfigure}

    \caption{The effect of \(k\) and \(\sigma\) on open-set FNIR@1\%FPIR. In (a), the horizontal line denotes performance without our method. The shaded area is the \(99\%\) confidence interval.}
    \label{fig:synthetic_exp_open_set}
\end{figure}

\paragraph{Effect of \(k\).}
As shown in \cref{fig:open_set_vs_k}, open-set FNIR@1\%FPIR decreases sharply for small \(k\) and then rises gradually. This trend aligns with real-world observations, where \paper consistently outperforms the baseline. Compared to our simulations, the optimal \(k\) in practice is smaller—\(k = 1\) in our experiments. While the Gaussian assumption in simulation may not fully capture real-world feature distributions, our method consistently improves open-set performance, achieving over 15\% gains at the optimal \(k\).

\paragraph{Effect of \(\sigma\).}
As shown in \cref{fig:open_set_vs_sigma_i}, performance gains depend strongly on intra-class variation. For small \(\sigma\) (low variation), improvements are small, but as \(\sigma\) increases, \paper rapidly surpasses raw scores, with substantial gains in challenging cases. This pattern aligns with real datasets, where greater benefits occur in modalities with higher intra-subject variation such as gait and person reID. In face recognition, which exhibits lower variation, \paper still delivers notable improvements, particularly for verification.

%% file: sec/6_conclusion.tex
\section{Conclusion}
\label{sec:conclusion}

We present a simple yet effective algorithm to enhance open-set search and verification performance by leveraging intra-subject variation. We evaluate our method across various biometric tasks, utilizing models with different architectures and trained with diverse loss functions to demonstrate its efficacy. Further, we provide both empirical and theoretical analyses to understand the condition under which our method improves the performance.

\paragraph{Acknowledgements.}
This research is based upon work supported in part by the Office of the Director of National Intelligence (ODNI), Intelligence Advanced Research Projects Activity (IARPA), via 2022-21102100004. The views and conclusions contained herein are those of the authors and should not be interpreted as necessarily representing the official policies, either expressed or implied, of ODNI, IARPA, or the U.S. Government. The U.S. Government is authorized to reproduce and distribute reprints for governmental purposes notwithstanding any copyright annotation therein.

%% file: sec/X_suppl.tex
\clearpage
\setcounter{page}{1}
\maketitlesupplementary

\appendix

\section{Effect on Rank-1 Accuracy}

The effect of Alg.~\hyperref[alg:main_alg]{1} on rank-1 accuracy can be summarized in the following theorem:

\begin{theorem}
Let \( s_1, s_2, \ldots, s_n \) denote a set of similarity scores produced by a closed-set recognition algorithm, where \( s_i \) represents the similarity score between a probe and the \( i \)-th gallery subject. If Alg.~\hyperref[alg:main_alg]{1} adds a non-negative value \( d \) to the highest score, the closed-set accuracy at any rank does not change.
\label{thm:closed_set_rank}
\end{theorem}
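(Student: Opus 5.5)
The plan is to show that adding \(d \ge 0\) to the maximal score leaves the ranking of all gallery subjects unchanged, up to ties. Closed-set accuracy at rank \(r\) depends only on the position of the genuine subject's score in the sorted order, so an unchanged ordering gives the result. Concretely, let \(i^\ast\) be an index with \(s_{i^\ast} = \max_j s_j\). Define the updated scores \(t_{i^\ast} = s_{i^\ast} + d\) and \(t_j = s_j\) for \(j \ne i^\ast\). If several indices attain the maximum, the indicator \(\mathbf{i}\) of Alg.~\hyperref[alg:main_alg]{1} marks all of them, and each of them is increased by the same \(d\); I will handle this case separately below.

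\textbf{Step 1: pairwise order is preserved.} Take any pair \((a,b)\).
\begin{itemize}
\item If neither index is incremented, then \(t_a - t_b = s_a - s_b\), so their relative order is unchanged.
\item If both are incremented (both are tied maxima), the difference is again unchanged.
\item If only \(a\) is incremented, then \(s_a \ge s_b\) because \(a\) is a maximizer. Hence \(t_a = s_a + d \ge s_a \ge s_b = t_b\).
\end{itemize}
In the last case, a strict inequality \(s_a > s_b\) stays strict. A non-strict one cannot be an equality, since \(b\) would then also be a maximizer and would also be incremented. So every strict inequality among the \(s_j\) is preserved among the \(t_j\), and every equality is preserved as well. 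The weak order induced by \(\mathbf{t}\) therefore coincides with the one induced by \(\mathbf{s}\).

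\textbf{Step 2: rank of the genuine subject.} Let \(g\) be the genuine subject. Its rank under any fixed tie-breaking convention is
\[
1 + \#\{j : s_j > s_g\}
\]
(or the analogous count with \(\ge\)). By Step 1, this count is identical for \(\mathbf{s}\) and \(\mathbf{t}\). Hence the event "\(g\) is within the top \(r\)" is unchanged for every probe and every \(r\). Averaging over probes gives that the rank-\(r\) accuracy is unchanged.

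The only delicate point is the treatment of ties at the maximum, together with the role of \(d \ge 0\). A negative \(d\) could push the top subject below the runner-up and change the ranking, so nonnegativity is exactly what is needed. In the paper's setting \(d = s^{(k)}\) is a cosine similarity and may be negative, so the theorem is stated conditionally on \(d \ge 0\), and I will apply it in that form. Ties are resolved by the observation that all tied maxima receive the same increment under the \(\max\)-indicator, so any deterministic tie-breaking rule that depends only on indices among equal scores gives identical outcomes before and after the update.
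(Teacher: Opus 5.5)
Your proof is correct and follows the same idea as the paper's: adding $d \ge 0$ to the top score preserves the relative order of all scores, so the genuine subject's rank, and hence the rank-$r$ accuracy, is unchanged. You also handle ties at the maximum explicitly, noting that the indicator $\mathbf{i}$ increments every tied maximizer; the paper's sorted-order argument glosses over this case.
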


\begin{proof}
    To show this, we can assume without loss of generality that the scores are in non-decreasing order, \ie, \( s_1 \le s_2 \le \dots \le s_n \). By Alg.~\hyperref[alg:main_alg]{1}, the highest score \( s_n \) is modified to \( s_n + d \), where \( d \ge 0 \). This results in the following inequality:
    \begin{align}
        s_1 \le s_2 \le \dots \le s_n \le s_n + d.
    \end{align}
    This adjustment preserves the relative ranking of the scores. 
    
    The closed-set accuracy at rank \( r \) is determined by the number of correct matches among the top \( r \) scores. Suppose the correct match corresponds to the score \( s_j \). If \( s_j \) is initially ranked within the top \( r \), we have \( j \ge n + 1 - r \). As the adjustment preserves relative ranking, \( s_j \) remains correctly ranked within the top \( r \).
    Conversely, if \( s_j \) is not among the top \( r \) scores initially, then \( j < n + 1 - r \). The increase in \( s_n \) does not affect the ranks of other scores below \( s_n \), ensuring \( s_j \) continues to be unranked within the top \( r\).

    In both cases, we conclude that Alg.~\hyperref[alg:main_alg]{1} does not change the closed-set accuracy at any rank. 
\end{proof}

\section{Proof of \cref{thm:main}}

In this section, we provide a proof of \cref{thm:main}. Before the proof, we reproduce the theorem below for your reference. 

\newcounter{savedeq}
\setcounter{savedeq}{\value{equation}}
\newcounter{savedtheorem}
\setcounter{savedtheorem}{\value{theorem}}

\setcounter{equation}{4}
\setcounter{theorem}{0}

Suppose the per-subject genuine scores are normally distributed as $\mathcal{N}(\mu_1, \sigma_1)$. 
The per-subject imposter scores are distributed as $\mathcal{N}(\mu_2, \sigma_2)$. 
The KNN scores for mated probes are distributed as $\mathcal{N}(\mu_3, \sigma_3)$, and the KNN scores for non-mated probes are distributed as $\mathcal{N}(\mu_4, \sigma_4)$.
Also, let $N_1$ be the number of mated probes, $N_2$ be the number of non-mated probes, $M$ is the number of gallery subjects, \(N_3\) be the total number of imposter scores, and $r_1$ and \(r_2\) be the tolerable number of maximum-per-probe non-mated scores (\eg, \(0.1\% \times N_2\)) and non-match scores (\eg, $0.1\% \times N_3$).

\begin{theorem}    
    Alg.~\hyperref[alg:main_alg]{1} improves the expected open-set FNIR@FPIR at the expected threshold if
    \begin{align}
        \frac{\mu_2 + \mu_4 + \delta \sqrt{\sigma_2^2 + \sigma_4^2} - (\mu_1 + \mu_3)}{\sqrt{\sigma_1^2 + \sigma_3^2}} < \frac{(\mu_2 + \delta \sigma_2) - \mu_1}{\sigma_1},
    \end{align}
    where \(\delta = -\ln (-\ln(1 - r_1 / N_2))\) and improves verification TAR@FAR at the expected threshold if
        \begin{multline} \frac{\mu_2 + \mu_4 + \Phi^{-1}\left(\frac{N_3 - r_2 - \alpha}{N_3 - 2\alpha + 1}\right) \sqrt{\sigma_2^2 + \sigma_4^2} - (\mu_1 + \mu_3)}{\sqrt{\sigma_1^2 + \sigma_3^2}} \\ < \frac{\mu_2 + \Phi^{-1}\left(\frac{N_3 - r_2 - \alpha}{N_3 - 2\alpha + 1}\right) \sigma_2 - \mu_1}{\sigma_1}, \end{multline}
    where $\Phi^{-1}(\cdot)$ is the inverse of the cumulative distribution function of normal distribution, and $\alpha=\pi/8$ is a constant.
\end{theorem}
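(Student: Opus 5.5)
The argument compares, for each metric, the expected fraction of mated probes that fail the decision threshold before and after Alg.~\hyperref[alg:main_alg]{1}. Throughout we work under the idealization implicit in the statement. The top-scoring subject for a mated probe is its genuine subject, so the fused genuine score is the sum of independent $\mathcal{N}(\mu_1,\sigma_1^2)$ and $\mathcal{N}(\mu_3,\sigma_3^2)$ variables, i.e.\ $\mathcal{N}(\mu_1+\mu_3,\sigma_1^2+\sigma_3^2)$. For a non-mated probe, the maximal imposter score receives the $k$-NN increment, so the relevant imposter quantity is $\mathcal{N}(\mu_2,\sigma_2^2)$ shifted by an independent $\mathcal{N}(\mu_4,\sigma_4^2)$. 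Treating each score as an independent Gaussian draw (and ignoring, as the theorem does, the dependence created by selecting the maximum before adding), the fused imposter statistic is modelled as $\mathcal{N}(\mu_2+\mu_4,\sigma_2^2+\sigma_4^2)$. In both cases, after fixing the threshold $\tau$ from the imposter side, the expected miss rate is $\Phi((\tau-\mu_{\mathrm{gen}})/\sigma_{\mathrm{gen}})$. Since $\Phi$ is monotone, the method improves the metric exactly when the standardized threshold after fusion is smaller than before. Both inequalities of \cref{thm:main} are statements of this form, so the work reduces to computing the expected thresholds.

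\textbf{Open-set.} FPIR is determined by the maximum-per-probe non-mated score. The threshold at $r_1$ tolerated false positives among $N_2$ non-mated probes is the $(1-r_1/N_2)$-quantile of the distribution of this maximum. We would approximate the maximum of many Gaussian imposter scores by a Gumbel law, using extreme value theory with location and scale absorbed into the given $(\mu_2,\sigma_2)$ parametrization. Inverting the Gumbel CDF $\exp(-e^{-x})=1-r_1/N_2$ gives $x=\delta=-\ln(-\ln(1-r_1/N_2))$, so the expected threshold is $\tau=\mu_2+\delta\sigma_2$ without fusion. With fusion the corresponding threshold is $\mu_2+\mu_4+\delta\sqrt{\sigma_2^2+\sigma_4^2}$. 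Substituting each threshold into $(\tau-\mu_{\mathrm{gen}})/\sigma_{\mathrm{gen}}$ yields exactly the two sides of \eqref{eq:open_set_theory}. A smaller left-hand side means a smaller expected FNIR at the same FPIR.

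\textbf{Verification.} FAR is computed over all $N_3$ imposter scores, so the threshold is the expected $(N_3-r_2)$-th order statistic of $N_3$ Gaussians. We would use Blom's approximation, $E[X_{(i)}]\approx\mu+\sigma\Phi^{-1}\bigl((i-\alpha)/(N_3-2\alpha+1)\bigr)$ with $\alpha=\pi/8$. Taking $i=N_3-r_2$ gives the stated quantile factor, first with parameters $(\mu_2,\sigma_2)$ and then with the fused parameters $(\mu_2+\mu_4,\sqrt{\sigma_2^2+\sigma_4^2})$. Comparing the standardized thresholds against the genuine distributions gives \eqref{eq:verification_theory}, since TAR $=1-\Phi(\cdot)$ is decreasing in the standardized threshold.

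The main obstacle is justifying the fused imposter distributions. Modelling the fused maximum as a Gumbel/Gaussian with parameters $(\mu_2+\mu_4,\sqrt{\sigma_2^2+\sigma_4^2})$ ignores that the $k$-NN term is added only to the maximal score, and after the fact it does not respect the correct extreme-value normalization. In verification, only one of the $M$ imposter scores per probe is boosted, yet the statement treats the whole imposter population as boosted. We would first try to argue that only the upper tail determines the threshold, that this tail is dominated by boosted top scores, and that the relevant tail order statistic therefore behaves like one drawn from the sum distribution. Failing that, we would state the result explicitly as holding under this modelling assumption, which is the level of rigour the phrase ``at the expected threshold'' signals.
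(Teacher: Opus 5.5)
Your proposal follows the paper's proof step for step. The paper uses a Gumbel-quantile threshold $\mu_2+\delta\sigma_2$ for the open-set case and the $\alpha=\pi/8$ order-statistic approximation (cited to Elfving) for the verification threshold. It models the fused scores as independent-Gaussian sums with parameters $(\mu_2+\mu_4,\sqrt{\sigma_2^2+\sigma_4^2})$ and $(\mu_1+\mu_3,\sqrt{\sigma_1^2+\sigma_3^2})$, and compares standardized thresholds using the monotonicity of $\Phi$. The step you flag as the main obstacle is simply taken as an assumption in the paper (the KNN scores are ``added to all match scores'' and ``to all non-match scores above this threshold''), so your fallback is the paper's level of rigour.

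Your miss-rate expression $\Phi\bigl((\tau-\mu_{\mathrm{gen}})/\sigma_{\mathrm{gen}}\bigr)$ is the correct form; the paper writes FNIR as $1-\Phi(\cdot)$. Your inversion of the Gumbel CDF at $1-r_1/N_2$ is also correct; the paper's proof at one point defines $\delta$ with $r_1/N_2$ in place of $1-r_1/N_2$.
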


\setcounter{theorem}{\value{savedtheorem}}
\setcounter{equation}{\value{savedeq}}

\begin{proof}
    We prove the two parts of the theorem, namely \cref{eq:open_set_theory} and \cref{eq:verification_theory} separately and we start with \cref{eq:open_set_theory}. 
    
    To prove \cref{eq:open_set_theory}, we begin by considering the FNIR threshold without KNN scores. By the Fisher-Tippett-Gnedenko theorem~\cite{gnedenko1943distribution, fisher1928limiting}, since the non-match scores follow a Gaussian distribution, the maximum non-mated scores converge to a Gumbel distribution. 
    
    Given the mean of non-match scores without KNN scores \(\mu_2\) and its standard deviation is \(\sigma_2\), the corresponding threshold is determined by the quantile of the Gumbel distribution. Specifically, the threshold for FPIR equal to \(\frac{r_1}{N_2}\) is given by: 
    \begin{align}
        \mu_2 + \left(-\ln -\ln \left(1 - \frac{r_1}{N_2}\right)\right) = \mu_2 + \delta \sigma_2,
    \end{align}
    if we let \(\delta = -\ln \left( -\ln \left(\frac{r_1}{N_2}\right) \right)\), which is the critical value corresponding to the desired FPIR. This threshold is derived from the inverse cumulative distribution function (CDF) of the Gumbel distribution. 

    Next, to compute the FNIR at this threshold, we use the fact that the match scores follow a Gaussian distribution with mean \(\mu_1\) and standard deviation \(\sigma_1\). The FNIR is then given by the complementary cumulative distribution function of the match scores at the threshold:
    \begin{align}
        \text{FNIR} = 1 - \Phi \left( \frac{(\mu_2 + \delta \sigma_2) - \mu_1}{\sigma_1} \right), \label{eq:expected_fnir}
    \end{align}
    where \(\Phi\) is the CDF of the standard normal distribution. This is the expected FNIR at the threshold based on the original match scores.

    Now, we introduce KNN scores. When KNN scores are used, the non-match scores become a sum of two Gaussian distributions: one with mean \(\mu_2\) and variance \(\sigma_2^2\), and the other with mean \(\mu_4\) and variance \(\sigma_4^2\). The resulting distribution is Gaussian with mean \(\mu_2 + \mu_4\) and standard deviation \(\sqrt{\sigma_2^2 + \sigma_4^2}\).

    Thus, the FPIR threshold with KNN scores is given by
    \begin{align}
        \mu_2 + \mu_4 + \delta \sqrt{\sigma_2^2 + \sigma_4^2}.
    \end{align}
    Similarly, the FNIR with KNN scores at the expected threshold is
    \begin{align}
        \text{FNIR}_{\text{KNN}} = 1 - \Phi \left( \frac{\mu_2 + \mu_4 + \delta \sqrt{\sigma_2^2 + \sigma_4^2} - (\mu_1 + \mu_3)}{\sqrt{\sigma_1^2 + \sigma_3^2}} \right), 
    \end{align}
    where \(\mu_1 + \mu_3\) is the mean of the match scores with KNN, and \(\sqrt{\sigma_1^2 + \sigma_3^2}\) is the standard deviation. Here it is assumed for simplicity that the KNN scores are added to all match scores, which is a reasonable assumption for real-world biometric systems.

    To prove the first part of the theorem, we show that Alg.~\hyperref[alg:main_alg]{1} improves the FNIR. This is equivalent to showing that the FNIR with KNN scores is smaller than the FNIR without KNN scores. To do so, we compare the arguments inside the CDFs of the two FNIR expressions. Specifically, we require:
    \begin{align}
        \frac{\mu_2 + \mu_4 + \delta \sqrt{\sigma_2^2 + \sigma_4^2} - (\mu_1 + \mu_3)}{\sqrt{\sigma_1^2 + \sigma_3^2}} < \frac{(\mu_2 + \delta \sigma_2) - \mu_1}{\sigma_1}.
    \end{align}
    This inequality guarantees that the FNIR with KNN scores is smaller (improved) compared to the FNIR without KNN scores. Since \(\Phi(\cdot)\) is a monotonically increasing function, the inequality implies that the FNIR with KNN scores is indeed lower.
    
    Next, we turn to the second part of the theorem: improving the verification TAR@FAR at the expected threshold. We begin by deriving the FAR threshold for the non-match scores, which follow a Gaussian distribution with mean \(\mu_2\) and standard deviation \(\sigma_2\). By \cite{elfving1947asymptotical}, the FAR threshold for a given false acceptance rate \( \frac{r_2}{N_3} \) is given by:
    \begin{align}
        \mu_2 + \Phi^{-1}\left(\frac{N_3 - r_2 - \alpha}{N_3 - 2\alpha + 1}\right) \sigma_2. \label{eq:expected_verify_threshold}
    \end{align}
    where \(\alpha = \frac{\pi}{8}\) is a constant and \(\Phi^{-1}\) is the inverse of the standard normal CDF.

    For verification TAR, we assume that Alg.~\hyperref[alg:main_alg]{1} adds the KNN scores to all non-match scores above this threshold. Thus, the new expected verification threshold becomes:
    \begin{align}
        \mu_2 + \mu_4 + \Phi^{-1}\left(\frac{N_3 - r_2 - \alpha}{N_3 - 2\alpha + 1}\right) \sqrt{\sigma_2^2 + \sigma_4^2}. \label{eq:expected_verify_threshold_with_knn}
    \end{align}

\begin{figure}
    \centering
    \includegraphics[width=\linewidth]{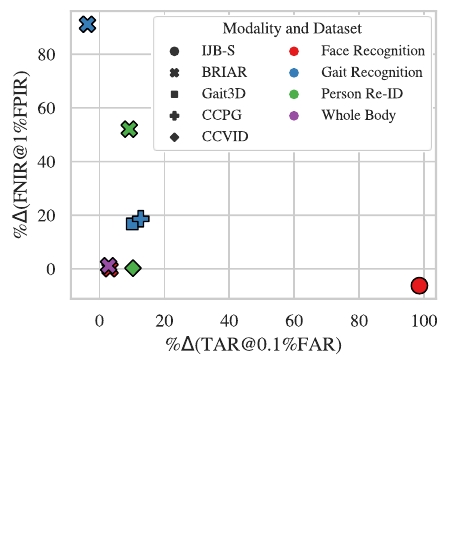}
    \vspace{-1.5in}
    \caption{The relative performance improvement (\(\%\Delta\)) of \paper on different modalities and datasets is calculated as \((\text{ours} - \text{baseline}) / \text{baseline}\), where ``ours'' represents TAR@0.1\%FAR or FNIR@1\%FPIR with \paper, and ``baseline'' refers to the corresponding values without \paper. A value of \((0, 0)\) indicates no improvement over the baseline in either metric. These results, derived from \cref{tab:all_results}, highlight the performance gains achieved by our approach.}
    \label{fig:relative_performance_diff}
\end{figure}

    Finally, we compare the z-scores of the verification thresholds with and without KNN scores. The z-score for the match score distribution (without KNN) is:
    \begin{align}
        \frac{\mu_2 + \mu_4 + \Phi^{-1}\left(\frac{N_3 - r_2 - \alpha}{N_3 - 2\alpha + 1}\right) \sqrt{\sigma_2^2 + \sigma_4^2} - (\mu_1 + \mu_3)}{\sqrt{\sigma_1^2 + \sigma_3^2}}.
    \end{align}
    For the case with KNN scores, the z-score is:
    \begin{align}
        \frac{\mu_2 + \Phi^{-1}\left(\frac{N_3 - r_2 - \alpha}{N_3 - 2\alpha + 1}\right) \sigma_2 - \mu_1}{\sigma_1},
    \end{align}
    By comparing the z-scores of the two distributions, we conclude that Alg.~\hyperref[alg:main_alg]{1} improves the verification TAR@FAR at the expected threshold if \cref{eq:verification_theory} holds.
\end{proof}

\section{Additional Experiments}

\begin{figure*}
    \centering
    \includegraphics[width=0.75\linewidth]{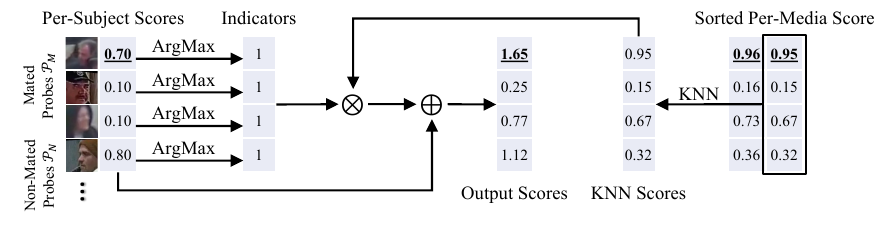}
    \caption{Illustration of single-subject evaluation using \paper. Compared to \cref{fig:toy_example}, when only one gallery subject is available, the indicators are always 1 since each probe has only one per-subject score. Therefore, the \(k\)-th largest score of the same subject is added to the per-subject scores.}
    \label{fig:single_subject}
\end{figure*}

\begin{table*}
    \centering\small
    \begin{NiceTabular}{lccccccccc}
        \toprule
        \Block{2-1}{Model} & \Block{2-1}{Dataset} & \Block{2-1}{Mod.} &
        \Block{2-1}{\(C_i\)} & \Block{2-1}{\% Gal.} & 
        \Block{2-1}{Run-\\time} &
        \Block{1-2}{FNIR@FPIR $\downarrow$} & &
        \Block{1-2}{TAR@FAR $\uparrow$} & \\ 
        \cmidrule(lr){7-8} \cmidrule(lr){9-10}
        & & & & & & 0.1\% & 1\% & 0.1\% & 1\% \\ 
        \midrule
    
        \Block{4-1}{KPRPE \\ \cite{kim2024keypoint}} & \Block{4-1}{IJB-S\textsuperscript{\textdagger} \\ \cite{kalka2018ijb}} & \Block{4-1}{Face} &
          N/A & \(<1\) & \(148\) & \(94.11\) & \(88.64\) & \(10.69\) & \(25.47\) \\
        & & & 5 & \(6.3\) & \(285\) & \(\mathbf{93.77}\) & \(88.81\) & \(21.88\) & \(45.27\) \\
        & & & 20 & \(23.4\) & \(595\) & \(94.50\) & \(\mathbf{88.57}\) & \(\mathbf{22.14}\) & \(\mathbf{45.62}\) \\
        & & & \(\infty\) & \(100\) & \(\) & \(94.49\) & \(89.12\) & \(21.76\) & \(45.46\) \\
        \midrule
    
        \Block{4-1}{SwinGait \\ \cite{fan2023exploring}} & 
          \Block{4-1}{Gait3D\textsuperscript{\textdagger} \\ \cite{zheng2022gait}} & \Block{4-1}{Gait} &
          N/A & \(<1\) & \(124\) & \(81.84\) & \(66.09\) & \(57.81\) & \(78.95\) \\
        & & & \(5\) & \(53.7\) & \(246\) & \(83.50\) & \(69.10\) & \(\mathbf{63.65}\) & \(78.95\) \\
        & & & \(20\) & \(92.3\) & \(319\) & \(81.01\) & \(65.43\) & \(\mathbf{63.65}\) & \(\mathbf{79.08}\) \\
        & & & \(\infty\) & \(100\) & \(338\) & \(\mathbf{78.80}\) & \(\mathbf{62.98}\) & \(\mathbf{63.65}\) & \(\mathbf{79.08}\) \\
        \midrule
    
        \Block{4-1}{BigGait \\ \cite{ye2024biggait}} & 
          \Block{4-1}{CCPG \\ \cite{li2023depth}} & \Block{4-1}{Gait} &
          N/A & \(<1\) & \(29\) & \(46.95\) & \(34.53\) & \(82.46\) & \(95.64\) \\
        & & & \(5\) & \(14.2\) & \(60\) & \(41.04\) & \(29.52\) & \(92.25\) & \(97.00\) \\
        & & & \(20\) & \(56.8\) & \(175\) & \(39.23\) & \(27.34\) & \(92.89\) & \(97.00\) \\
        & & & \(\infty\) & \(100\) & \(301\) & \(\mathbf{37.04}\) & \(\mathbf{25.34}\) & \(\mathbf{92.95}\) & \(\mathbf{97.01}\) \\
        \midrule
    
        \Block{4-1}{CAL} & \Block{4-1}{CCVID\textsuperscript{\textdagger} \\ \cite{gu2022clothes}} & \Block{4-1}{ReID} &
          N/A & \(<1\) & \(15\) & \(39.98\) & \(34.38\) & \(68.19\) & \(81.81\) \\
        & & & 5 & \(60.8\) & \(31\) & \(\mathbf{38.39}\) & \(\mathbf{33.72}\) & \(\mathbf{75.69}\) & \( \mathbf{85.34}\) \\
        & & & 20 & \(100\) & \(41\) & \(38.98\) & \(34.19\) & \(75.23\) & \(\mathbf{85.34}\) \\
        & & & \(\infty\) & \(100\) & \(41\) & \(38.98\) & \(34.19\) & \(75.23\) & \(\mathbf{85.34}\) \\
        \bottomrule
    \end{NiceTabular}
    \caption{Open-set FNIR@FPIR and verification TAR@FAR under different \(C_i\).
    \% Gal.: the ratio of the size of the clustered gallery to that of the full gallery.
    Runtime is reported in milliseconds (ms).
    \textsuperscript{\textdagger}FNIR@FPIR reported at 1\% and 5\% thresholds.}
    \label{tab:gallery_cluster_full_results}
\end{table*}

\paragraph{Relative Performance Improvement}
\cref{fig:relative_performance_diff} presents the results of \cref{tab:all_results} visually: the percentage of performance improvement compared to baselines on different modalities and datasets. \paper achieves significant performance improvements across all datasets, enhancing either verification TAR@0.1\%FAR or FNIR@1\%FPIR. Notably, it improves verification by approximately 100\% on the challenging IJB-S face recognition dataset and boosts gait recognition performance by about 90\% on BRIAR. Overall, \paper demonstrates greater performance gains in gait recognition and person reID, where open-set biometrics pose greater challenges, while also delivering improvements in face recognition. These results highlight the effectiveness and generalizability of our approach across diverse biometric tasks.

\subsection{1:1 Evaluation.}
The key distinction between the 1:1 and standard evaluation settings lies in the assumed availability of gallery data. Standard evaluation follows the conventional biometric protocol in which all enrolled subjects’ galleries are accessible for both verification and retrieval. This reflects typical operational scenarios, where a complete gallery is pre-enrolled prior to deployment; even in verification, galleries of non-claimed subjects are available, though they are not explicitly used by most systems. In extremely constrained environments, however, access to galleries of irrelevant subjects may be restricted. To assess \paper under such conditions, we additionally evaluate performance in the 1:1 setting, where only the gallery samples of the claimed identity are provided.

As illustrated in \cref{fig:single_subject}, this scenario yields a single column of per-subject scores that reflect the sole accessible gallery and fewer per-media scores compared to \cref{fig:toy_example}, since non-claimed identities’ media are excluded. Consequently, all indicators evaluate to 1, as each is computed via an argmax over a single value, and the \(k\)-NN scores are added to the per-subject scores. Despite these constraints, \cref{tab:all_results} shows that \paper continues to improve the performance of multiple biometric models across modalities in both open-set retrieval and verification.

\begin{table*}
    \centering\small
    \begin{NiceTabular}{lcccccccc}
        \toprule
        \Block{2-1}{Task} & \Block{2-1}{Model} & \Block{2-1}{Dataset} &
        \Block{2-1}{Mod.} &
        \Block{2-1}{Method} &
        \Block{1-2}{FNIR@FPIR $\downarrow$} & &
        \Block{1-2}{TAR@FAR $\uparrow$} & \\ 
        \cmidrule(lr){6-7} \cmidrule(lr){8-9}
        & & & & & 0.1\% & 1\% & 0.1\% & 1\% \\ 
        \midrule
    
        \Block{5-1}{Face} & 
            \Block{5-1}{KPRPE \\ \cite{kim2024keypoint}} & 
            \Block{5-1}{IJB-S\textsuperscript{\textdagger} \\ \cite{kalka2018ijb}} &
            \Block{5-1}{Face} &
            Max & \(95.39\) & \(89.80\) & \(8.73\) & \(22.40\) \\
        & & & & Min & \(99.83\) & \(99.11\) & \(1.40\) & \(7.45\) \\
        & & & & Mean & \(96.14\) & \(90.92\) & \(9.55\) & \(25.06\) \\
        & & & & R-R & \(95.32\) & \(92.45\) & \(8.69\) & \(23.98\) \\
        & & & & Ours & \(\mathbf{94.49}\) & \(\mathbf{89.12}\) & \(\mathbf{21.76}\) & \(\mathbf{45.47}\) \\
        \midrule
    
        \Block{5-1}{Gait} &
            \Block{5-1}{SwinGait \\ \cite{fan2023exploring}} &
            \Block{5-1}{Gait3D \\ \cite{zheng2022gait}} &
            \Block{5-1}{Gait} &
            Max & \(81.92\) & \(\mathbf{60.40}\) & \(70.66\) & \(91.11\) \\
        & & & & Min & \(88.02\) & \(73.24\) & \(50.99\) & \(78.34\) \\
        & & & & Mean & \(83.72\) & \(60.81\) & \(\mathbf{74.00}\) & \(\mathbf{93.73}\) \\
        & & & & R-R & \(82.84\) & \(67.61\) & \(56.96\) & \(77.98\) \\
        & & & & Ours & \(\mathbf{78.80}\) & \(62.98\) & \(63.65\) & \(79.08\) \\
        \midrule
    
        \Block{5-1}{Gait} & 
            \Block{5-1}{BigGait \\ \cite{ye2024biggait}}  & 
            \Block{5-1}{CCPG \\ \cite{li2023depth}} & 
            \Block{5-1}{Gait} &
            Max & \(42.87\) & \(28.92\) & \(86.94\) & \(\mathbf{97.11}\) \\
        & & & & Min & \(83.23\) & \(70.97\) & \(48.65\) & \(77.88\) \\
        & & & & Mean & \(51.08\) & \(38.12\) & \(79.22\) & \(95.20\) \\
        & & & & R-R & \(47.84\) & \(36.56\) & \(81.12\) & \(95.14\) \\
        & & & & Ours & \(\mathbf{37.04}\) & \(\mathbf{25.34}\) & \(\mathbf{92.95}\) & \(97.01\) \\
        \midrule
    
        \Block{5-1}{ReID} & 
            \Block{5-1}{CAL \\ \cite{gu2022clothes}} & 
            \Block{5-1}{CCVID\textsuperscript{\textdagger} \\ \cite{gu2022clothes}} &
            \Block{5-1}{ReID} &
            Max & \(40.55\) & \(\mathbf{32.89}\) & \(69.65\) & \(82.83\) \\
        & & & & Min & \(54.49\) & \(45.86\) & \(60.97\) & \(80.36\) \\
        & & & & Mean & \(\mathbf{38.74}\) & \(34.38\) & \(69.16\) & \(82.87\) \\
        & & & & R-R & \(40.52\) & \(34.12\) & \(68.88\) & \(83.60\) \\
        & & & & Ours & \(38.98\) & \(34.19\) & \(\mathbf{75.23}\) & \(\mathbf{85.34}\) \\
        \bottomrule
    \end{NiceTabular}
    \caption{Open-set FNIR@FPIR and verification TAR@FAR of multi-sample methods. \textsuperscript{\textdagger}FNIR@FPIR reported at 1\% and 5\% thresholds. [R-R: Re-ranking~\cite{zhong2017re}]}
    \label{tab:multi_sample_full_results}
\end{table*}

\subsection{Gallery Clustering}
\cref{tab:gallery_cluster_full_results} shows that increasing \(C_i\)—which corresponds to using a larger portion of the clustered gallery—generally improves both open-set and verification performance across all models and modalities, while introducing only moderate runtime overhead. In the baseline setting (\(C_i=\text{N/A}\)), where less than 1\% of the gallery is used, performance is consistently lower. As \(C_i\) increases, FNIR decreases and TAR increases, with most models achieving their best or near-best results when leveraging the full gallery (\(C_i=\infty\)). This trend is especially pronounced for gait and ReID models: BigGait on CCPG, for example, sees FNIR@0.1\% drop from \(46.95\%\) to \(37.04\%\) and TAR@0.1\% rise from \(82.46\%\) to \(92.95\%\). Similar improvements appear for SwinGait on Gait3D and CAL on CCVID. For face recognition on IJB-S, the gains plateau earlier, with \(C_i=20\) already matching or surpassing the full-gallery setting. Overall, the results indicate that incorporating a larger clustered gallery consistently enhances recognition performance across diverse biometric modalities, while the additional computation remains manageable for real-world deployment.

\subsection{Comparison to multi-sample methods}
\cref{tab:multi_sample_full_results} compares \paper with conventional multi-sample aggregation strategies across face, gait, and ReID tasks. Overall, the results show that pooling-based methods such as max, mean, and min exhibit inconsistent behavior: max often provides competitive FNIR but relatively weaker TAR, while mean generally improves verification performance but struggles in open-set conditions. Min performs worst across all settings, confirming its sensitivity to outliers. Re-ranking (R-R) offers moderate improvements but remains limited by its reliance on reciprocal neighbors and multiple probes. In contrast, \paper delivers strong and consistent gains, particularly in challenging open-set scenarios. For instance, on CCPG gait, \paper reduces FNIR@0.1\% FPIR to \(37.04\%\), markedly outperforming max (\(42.87\%\)) and mean (\(51.08\%\)), while also achieving the highest TAR@0.1\% FAR (\(92.95\%\)). Similar improvements appear in face and ReID settings, where \paper achieves the best or near-best performance across all metrics. These results highlight the advantage of incorporating local gallery structure into the scoring function, enabling more reliable recognition than traditional aggregation or re-ranking methods.

\subsection{Statistical Significant Tests}

\begin{table*}
    \centering\small
    \begin{NiceTabular}{lcccccc}
    \toprule
        \Block{2-1}{Model} & \Block{2-1}{Dataset} & \Block{2-1}{Gallery} & \Block{1-2}{FNIR@FPIR} & & \Block{1-2}{TAR@FAR} & \\ \cmidrule(lr){4-5} \cmidrule(lr){6-7}
        & & & 1\% & 5\% & 0.1\% & 1\% \\ \midrule
        \Block{2-1}{SwinGait} & \Block{2-1}{Gait3D} & Avg+Euc & \(0.0001\) & \(0.0001\) & \(0.0001\) & \(0.0001\) \\
         & & EVM & \(0.6441\) & \(0.3295\) & \(0.0001\) & \(0.3120\) \\ \midrule
        \Block{2-1}{BigGait} & \Block{2-1}{CCPG} & Avg+Euc & \(0.0001\) & \(0.0001\) & \(0.0001\) & \(0.0001\) \\
         & & EVM & \(0.0167\) & \(0.0014\) & \(0.0001\) & \(0.0001\) \\ \midrule
        \Block{2-1}{CAL} & \Block{2-1}{CCVID} & Avg+Cos & \(0.0001\) & \(0.4077\) & \(0.0001\) & \(0.0001\) \\
         & & EVM & \(1.0000\) & \(0.0012\) & \(0.0001\) & \(0.6741\) \\
        \bottomrule 
    \end{NiceTabular}
    \caption{FNIR@FPIR, TAR@FAR, and their statistical significance tests on real-world biometric datasets.}
    \label{tab:p_tests}
\end{table*}

We perform two-sample t-tests to evaluate whether the performance gains achieved with \paper are statistically significant across random mated-vs-non-mated splits, as shown in \cref{tab:p_tests}. The results indicate that all improvements in gait recognition are highly significant, with most \(p\)-values smaller than $0.0001$. Although the improvements on the CAL model for the CCVID dataset are less significant, \paper still achieves notable verification gains over CAL, which demonstrates its effectiveness. On other datasets, the large probe set sizes reduce random fluctuations. 

\begin{table*}
    \centering\small
    \begin{NiceTabular}{lccccccccc}
        \toprule
        \Block{2-1}{Modality} & \Block{2-1}{Model} & \Block{2-1}{Dataset} & \Block{2-1}{Gallery} & \Block{2-1}{Post-proc.} & \Block{1-2}{FNIR@FPIR $\downarrow$} & & \Block{1-2}{TAR@FAR $\uparrow$} & & \Block{2-1}{R@1 $\uparrow$} \\ \cmidrule(lr){6-7} \cmidrule(lr){8-9} 
        &  & & & & 0.1\% & 1\% & 0.1\% & 1\% \\ \midrule 
        \Block{6-1}{Face} & \Block{6-1}{KPRPE \\ \cite{kim2024keypoint}} & \Block{6-1}{BRIAR\textsuperscript{\textdaggerdbl} \\ \cite{cornett2023expanding}} & \Block{6-1}{CAFace \\ \cite{kim2022cluster}} & N/A & \(65.55\) & \(54.93\) & \(59.93\) & \(\mathbf{72.52}\) & \({84.34}\) \\
        & & & & +const. & \(65.55\) & \(55.00\) & \(61.64\) & \(72.39\) & \(84.34\) \\
        & & & & \(\times 2\) & \(65.55\) & \(55.00\) & \(61.64\) & \(\mathbf{72.52}\) & \(84.34\) \\
        & & & & 1,2-NN & \(64.99\) & \(\mathbf{54.85}\) & \(61.82\) & \(\mathbf{72.52}\) & \(84.34\) \\
        & & & & 1,2,3-NN & \(65.12\) & \(54.91\) & \(\mathbf{62.01}\) & \(\mathbf{72.52}\) & \(84.34\) \\
        & & & & Ours & \(\mathbf{64.81}\) & \({54.91}\) & \({61.89}\) & \(\mathbf{72.52}\) & \({84.34}\) \\ \midrule\midrule
        \Block{6-1}{Gait} & \Block{6-1}{SwinGait \\ \cite{fan2023exploring}} & \Block{6-1}{Gait3D\textsuperscript{\textdagger} \\ \cite{zheng2022gait}} & \Block{6-1}{Avg + Euc} & N/A & \(81.84 \smallpm 1.76\) & \(66.09 \smallpm 0.69\) & \(57.81\) & \(78.95\) & \({72.2}\) \\
        & & & & +const. & \(81.85 \smallpm 1.75\) & \(66.46 \smallpm 0.64\) & \(\mathbf{63.65}\) & \(\mathbf{79.08}\) & \(72.2\) \\
        & & & & \(\times 2\) & \(81.85 \smallpm 1.75\) & \(66.46 \smallpm 0.64\) & \(\mathbf{63.65}\) & \(\mathbf{79.08}\) & \(72.2\) \\
        & & & & 1,2-NN & \(80.19 \smallpm 1.85\) & \(64.69 \smallpm 0.61\) & \(\mathbf{63.65}\) & \(\mathbf{79.08}\) & \(72.2\) \\
        & & & & 1,2,3-NN & \(81.78 \smallpm 1.65\) & \(66.34 \smallpm 0.61\) & \(\mathbf{63.65}\) & \(\mathbf{79.08}\) & \(72.2\) \\
        & & & & Ours & \(\mathbf{78.80 \smallpm 1.99}\) & \(\mathbf{62.98 \smallpm 0.78}\) & \(\mathbf{63.65}\) & \(\mathbf{79.08}\) & \({72.2}\) \\
        \bottomrule
    \end{NiceTabular}
    \caption{Open-set FNIR@FPIR, verification TAR@FAR, and closed-set Rank-1 accuracy on real-world biometric datasets. 95\% confidence intervals are reported for FNIR@FPIR. \textsuperscript{\textdagger}FNIR@1\%FPIR and FNIR@5\%FPIR are reported. \textsuperscript{\textdaggerdbl}Rank-20 accuracy is reported according to the official evaluation protocol [Keys: R@1=rank-1 accuracy]}
    \label{tab:additional_results}
\end{table*}

\subsection{Effect of KNN scores}

We conduct additional experiments to analyze the effects of KNN scores on the original score matrices using several variations. First, we add a constant \(1\) to the highest score for each probe, which does not introduce additional discriminability between mated and non-mated probes (``+const.''). Second, we replace the highest score for each probe with twice its value, increasing its magnitude while maintaining limited discriminability between mated and non-mated probes but without leveraging KNN scores (``\(\times 2\)''). Third, we use the average of the 2 nearest neighbors (1,2-KNN) or the 3 nearest neighbors (1,2,3-KNN). 

As shown in \cref{tab:additional_results}, while all these approaches improve verification performance, none surpasses our proposed method. For open-set FNIR@FPIR, none of these variations outperforms \paper on Gait3D. Furthermore, on face recognition, the ``+const.'' and ``\(\times 2\)'' approaches fail to improve open-set performance, highlighting the necessity of incorporating KNN information for better open-set results. Including more nearest neighbors yields comparable results when all neighbors provide reasonable performance, as in face recognition. However, this strategy performs worse for gait recognition, where additional nearest neighbors contribute lower-quality information. In general, we conclude that KNN scores are both crucial and effective for improving verification and open-set performance.

\section{Additional Analyses}

\subsection{Additional Analyses on \cref{thm:main}}

Building on the analysis in \cref{fig:mc_sim_theorem_2_open_set}, we examine how \cref{eq:verification_theory} provides insights into real-world verification performance, focusing on the impact of \(\mu_3\). Using the same experimental setup as \cref{fig:mc_sim_theorem_2_score_matrix_open_set}, we analyze how variations in \(\mu_3\) influence performance, with \(\mu_3^{**}\) denoting the value where the LHS and RHS of \cref{eq:verification_theory} are equal. The results confirm that the theoretically predicted \(\mu_3^{**}\) closely aligns with the real-world value where applying \paper has no effect on performance. When \(\mu_3 < \mu_3^{**}\), \paper decreases verification performance, whereas for \(\mu_3 > \mu_3^{**}\), it improves performance.

\begin{figure}
    \centering
    
    \begin{subfigure}{0.45\linewidth}
        \centering
        \includegraphics[width=\linewidth]{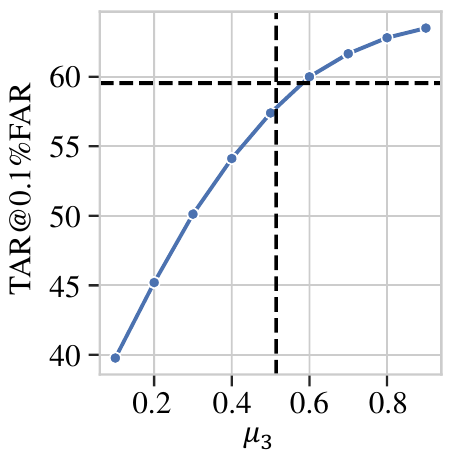}
        \caption{}
        \label{fig:mc_sim_theorem_2_score_matrix_verification}
    \end{subfigure}
    \hfill
    \begin{subfigure}{0.45\linewidth}
        \centering
        \includegraphics[width=\linewidth]{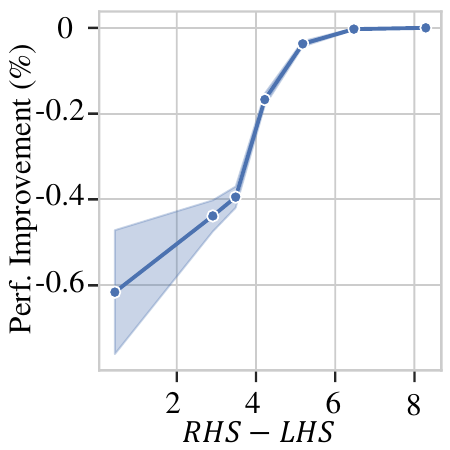}
        \caption{}
        \label{fig:mc_sim_theorem_2_features_verification}
    \end{subfigure}

    \caption{(a) The effect of \(\mu_3\) on TAR@0.1\%FAR. The vertical dashed line indicates the theoretically predicted \(\mu_3^{**}\), and the horizontal line indicates the performance without \paper. (b) The impact of the difference between the RHS and LHS of \cref{eq:verification_theory} on the improvement of TAR@0.1\%FAR by using \paper. The shaded area is the \(99\%\) confidence interval.}
    \label{fig:mc_sim_theorem_2_verification}
\end{figure}

\cref{fig:mc_sim_theorem_2_features_verification} shows how the difference between the RHS and LHS of \cref{eq:verification_theory} influences TAR@0.1\%FAR, with larger differences generally leading to greater improvements. However, verification performance slightly declines with KNN, likely due to a larger offset between actual and theoretical \(\mu_3\) values in \cref{fig:mc_sim_theorem_2_score_matrix_verification}, where no improvement is predicted. This may result from the stronger impact of the normality assumptions in \cref{eq:verification_theory} on verification compared to open-set biometrics, as also observed in \cref{fig:synthetic_exp_verification}, where KNN often worsens performance. Despite this, \paper consistently enhances verification on real-world biometric datasets, as demonstrated in \cref{fig:relative_performance_diff}, with significant improvements except for gait recognition on BRIAR. This indicates that while \cref{eq:open_set_theory} offers practical guidance for real-world scenarios, \cref{eq:verification_theory} is less effective in predicting performance improvements with \paper.

\subsection{Additional Ablation Study}

\begin{figure}
    \centering
    
    \begin{subfigure}{0.45\linewidth}
        \centering
        \includegraphics[width=\linewidth]{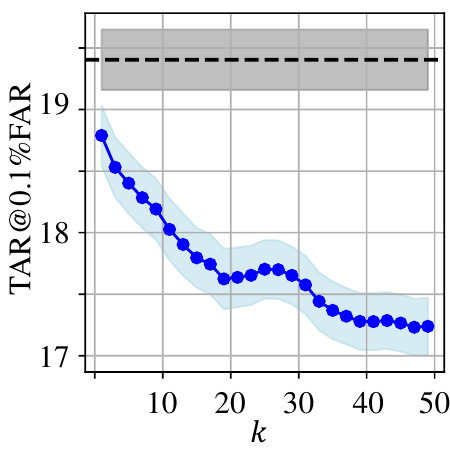}
        \caption{The effect of \(k\).}
        \label{fig:verification_vs_k}
    \end{subfigure}
    \hfill
    \begin{subfigure}{0.45\linewidth}
        \centering
        \includegraphics[width=\linewidth]{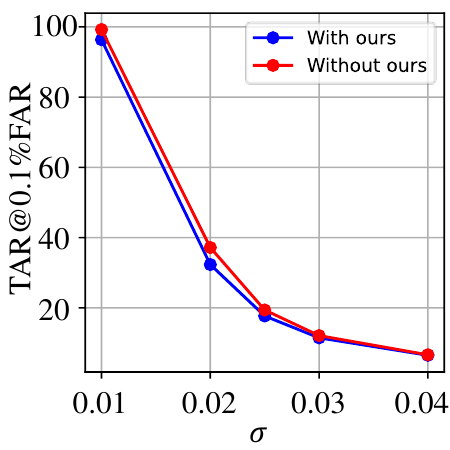}
        \caption{The effect of \(\sigma\).}
        \label{fig:verification_vs_sigma_i}
    \end{subfigure}

    \caption{The effect of \(k\) and \(\sigma\) on TAR@0.1\%FAR. In (a), the horizontal line denotes performance without \paper. The shaded area is the \(99\%\) confidence interval.}
    \label{fig:synthetic_exp_verification}
\end{figure}

\cref{fig:synthetic_exp_verification} illustrates the effect of \paper on verification performance using synthetic datasets. In \cref{fig:verification_vs_k}, TAR@0.1\%FAR shows a gradual decrease as \(k\) increases, but the decrease is minimal—at worst about 2\%—compared to the significant improvement of FNIR@1\%FPIR in Fig. 4(a), which reaches approximately 15\%. This demonstrates that \paper substantially enhances open-set performance with only slightly reducing verification performance on randomly generated features.

\cref{fig:verification_vs_sigma_i} examines the impact of \(\sigma\), which governs intra-class variation, on verification performance. As \(\sigma\) increases, intra-class variation grows, and the performance gap between using and not using \paper narrows. Importantly, our approach achieves comparable verification performance to the baseline, even with higher \(\sigma\). This demonstrates that \paper enhances open-set performance without compromising verification, providing a balanced and comprehensive evaluation of biometric systems.

\section{Limitations}
\paper assumes that gallery media are well-distributed and provide a reasonable approximation of the actual gallery feature distribution. While this assumption may not always hold in real-world scenarios, we introduce a theoretical framework (\cref{thm:main}) that predicts whether \paper will improve verification and open-set biometric performance. We validate this framework on real-world datasets as well as on randomly generated features and scores. In practice, a small test set can be used to assess the potential performance gains of \paper before deployment.

Our scoring algorithm requires additional storage for per-media features, but this demand is manageable due to several factors. Biometric features are typically compact, and high-capacity storage is increasingly affordable. Additionally, clustering the gallery samples can reduce storage load without significantly sacrificing performance, and cloud-based storage is scalable and cost-effective in handling data without local limitations. 

\section{Potential Societal Impacts}

While our work advances biometric technologies with potential benefits in security and forensics, it also raises concerns about privacy and individual freedoms. Sensitive biometric data, such as facial features, could be misused for unauthorized surveillance or profiling, which severely erodes public trust and infringes on rights. We prioritize using data collected with informed consent and avoid collecting any biometric data directly. We urge the community to adopt transparent and responsible practices to ensure that biometrics serve society positively while minimizing potential harm.

%% file: sec/refs.bib
@String(CVPR= {IEEE Conf. Comput. Vis. Pattern Recog.})

@String(ICCV= {Int. Conf. Comput. Vis.})

@String(ECCV= {Eur. Conf. Comput. Vis.})

@String(ICLR = {Int. Conf. Learn. Represent.})

@String(CVPR  = {CVPR})

@String(ICCV  = {ICCV})

@String(ECCV  = {ECCV})

@String(ICLR  = {ICLR})

@inproceedings{ye2024biggait,
  title={BigGait: Learning Gait Representation You Want by Large Vision Models},
  author={Ye, Dingqiang and Fan, Chao and Ma, Jingzhe and Liu, Xiaoming and Yu, Shiqi},
  booktitle={{CVPR}},
  year={2024}
}

@inproceedings{kim2022cluster,
  title={Cluster and aggregate: Face recognition with large probe set},
  author={Kim, Minchul and Liu, Feng and Jain, Anil K and Liu, Xiaoming},
  booktitle={{NeurIPS}},
  year={2022}
}

@inproceedings{kim2024keypoint,
  title={KeyPoint Relative Position Encoding for Face Recognition},
  author={Kim, Minchul and Su, Yiyang and Liu, Feng and Jain, Anil and Liu, Xiaoming},
  booktitle={{CVPR}},
  year={2024}
}

@inproceedings{kim2022adaface,
  title={Adaface: Quality adaptive margin for face recognition},
  author={Kim, Minchul and Jain, Anil K and Liu, Xiaoming},
  booktitle={{CVPR}},
  year={2022}
}

@inproceedings{gunther2017toward,
  title={Toward open-set face recognition},
  author={G\"{u}nther, Manuel and Cruz, Steve and Rudd, Ethan M and Boult, Terrance E},
  booktitle={CVPR Workshops},
  year={2017}
}

@misc{grother2019face,
  title={Face recognition vendor test ({FRVT}) part 2: Identification},
  author={Grother, Patrick and Ngan, Mei and Hanaoka, Kayee},
  year={2019},
  publisher={US Department of Commerce, National Institute of Standards and Technology~…},
  url={https://www.nist.gov/document/nistir827120190911pdf},
  urldate={2024-11-14}
}

@inproceedings{kalka2018ijb,
  title={{IJB-S:} {IARPA} Janus Surveillance Video Benchmark},
  author={Kalka, Nathan D and Maze, Brianna and Duncan, James A and O’Connor, Kevin and Elliott, Stephen and Hebert, Kaleb and Bryan, Julia and Jain, Anil K},
  booktitle={BTAS},
  year={2018},
}

@Inbook{Jain2025,
author="Jain, Anil K.
and Ross, Arun A.
and Nandakumar, Karthik
and Swearingen, Thomas",
title="Face Recognition",
bookTitle="Introduction to Biometrics",
year="2025",
publisher="Springer International Publishing",
pages="119--173",
}

@inproceedings{zheng2022gait,
  title={Gait recognition in the wild with dense 3d representations and a benchmark},
  author={Zheng, Jinkai and Liu, Xinchen and Liu, Wu and He, Lingxiao and Yan, Chenggang and Mei, Tao},
  booktitle={{CVPR}},
  year={2022}
}

@article{fan2023exploring,
  title={Exploring Deep Models for Practical Gait Recognition},
  author={Fan, Chao and Hou, Saihui and Huang, Yongzhen and Yu, Shiqi},
  journal={arXiv preprint arXiv:2303.03301},
  year={2023}
}

@inproceedings{li2023depth,
  title={An In-Depth Exploration of Person Re-Identification and Gait Recognition in Cloth-Changing Conditions},
  author={Li, Weijia and Hou, Saihui and Zhang, Chunjie and Cao, Chunshui and Liu, Xu and Huang, Yongzhen and Zhao, Yao},
  booktitle={CVPR},
  year={2023}
}

@article{shen2022comprehensive,
  title={A comprehensive survey on deep gait recognition: algorithms, datasets and challenges},
  author={Shen, Chuanfu and Yu, Shiqi and Wang, Jilong and Huang, George Q and Wang, Liang},
  journal={arXiv preprint arXiv:2206.13732},
  year={2022}
}

@inproceedings{gu2022clothes,
  title={Clothes-changing person re-identification with rgb modality only},
  author={Gu, Xinqian and Chang, Hong and Ma, Bingpeng and Bai, Shutao and Shan, Shiguang and Chen, Xilin},
  booktitle={CVPR},
  year={2022}
}

@InProceedings{liu2024distilling,
    author    = {Liu, Feng and Kim, Minchul and Ren, Zhiyuan and Liu, Xiaoming},
    title     = {Distilling {CLIP} with Dual Guidance for Learning Discriminative Human Body Shape Representation},
    booktitle = {{CVPR}},
    year      = {2024}
}

@article{hermans2017defense,
  title={In defense of the triplet loss for person re-identification},
  author={Hermans, Alexander and Beyer, Lucas and Leibe, Bastian},
  journal={arXiv preprint arXiv:1703.07737},
  year={2017}
}

@inproceedings{vaze2022openset,
title={Open-Set Recognition: A Good Closed-Set Classifier is All You Need},
author={Sagar Vaze and Kai Han and Andrea Vedaldi and Andrew Zisserman},
booktitle={{ICLR}},
year={2022},
}

@inproceedings{kong2021opengan,
  title={Opengan: Open-set recognition via open data generation},
  author={Kong, Shu and Ramanan, Deva},
  booktitle={{ICCV}},
  year={2021}
}

@inproceedings{yang2022openood,
  title={Openood: Benchmarking generalized out-of-distribution detection},
  author={Yang, Jingkang and Wang, Pengyun and Zou, Dejian and Zhou, Zitang and Ding, Kunyuan and Peng, Wenxuan and Wang, Haoqi and Chen, Guangyao and Li, Bo and Sun, Yiyou and others},
  booktitle={{NeurIPS}},
  year={2022}
}

@inproceedings{wang2022openauc,
title={Open{AUC}: Towards {AUC}-Oriented Open-Set Recognition},
author={Zitai Wang and Qianqian Xu and Zhiyong Yang and Yuan He and Xiaochun Cao and Qingming Huang},
booktitle={{NeurIPS}},
year={2022},
}

@inproceedings{bendale2016towards,
  title={Towards open set deep networks},
  author={Bendale, Abhijit and Boult, Terrance E},
  booktitle={{CVPR}},
  year={2016}
}

@inproceedings{kuchibhotla2022unseen,
  title={Unseen Classes at a Later Time? No Problem},
  author={Kuchibhotla, Hari Chandana and Malagi, Sumitra S and Chandhok, Shivam and Balasubramanian, Vineeth N},
  booktitle={{CVPR}},
  year={2022}
}

@inproceedings{sun2022out,
  title={Out-of-distribution detection with deep nearest neighbors},
  author={Sun, Yiyou and Ming, Yifei and Zhu, Xiaojin and Li, Yixuan},
  booktitle={{ICML}},
  year={2022},
}

@inproceedings{cruz2024operational,
  title={Operational Open-Set Recognition and PostMax Refinement},
  author={Cruz, Steve and Rabinowitz, Ryan and G{\"u}nther, Manuel and Boult, Terrance E.},
  booktitle={{ECCV}},
  year={2024}
}

@article{zahra2023person,
  title={Person re-identification: A retrospective on domain specific open challenges and future trends},
  author={Zahra, Asmat and Perwaiz, Nazia and Shahzad, Muhammad and Fraz, Muhammad Moazam},
  journal={Pattern Recognition},
  year={2023},
}

@article{geng2020recent,
  title={Recent advances in open set recognition: A survey},
  author={Geng, Chuanxing and Huang, Sheng-jun and Chen, Songcan},
  journal={IEEE TPAMI},
  year={2020},
}

@InProceedings{su2024open,
    title={Open-Set Biometrics: Beyond Good Closed-Set Models},
    author={Su, Yiyang and Kim, Minchul and Liu, Feng and Jain, Anil and Liu, Xiaoming},
    booktitle = {{{ECCV}}},
    year      = {2024},
}

@inproceedings{cornett2023expanding,
  title={Expanding accurate person recognition to new altitudes and ranges: The briar dataset},
  author={Cornett, David and Brogan, Joel and Barber, Nell and Aykac, Deniz and Baird, Seth and Burchfield, Nicholas and Dukes, Carl and Duncan, Andrew and Ferrell, Regina and Goddard, Jim and others},
  booktitle={WACV},
  year={2023}
}

@inproceedings{zhong2017re,
  title={Re-ranking person re-identification with k-reciprocal encoding},
  author={Zhong, Zhun and Zheng, Liang and Cao, Donglin and Li, Shaozi},
  booktitle={{CVPR}},
  year={2017}
}

@inproceedings{liu2024farsight,
  title={Farsight: A physics-driven whole-body biometric system at large distance and altitude},
  author={Liu, Feng and Ashbaugh, Ryan and Chimitt, Nicholas and Hassan, Najmul and Hassani, Ali and Jaiswal, Ajay and Kim, Minchul and Mao, Zhiyuan and Perry, Christopher and Ren, Zhiyuan and others},
  booktitle={{WACV}},
  year={2024}
}

@inproceedings{bai2019re,
    author    = {Bai, Song and Tang, Peng and Torr, Philip H.S. and Latecki, Longin Jan},
    title     = {Re-Ranking via Metric Fusion for Object Retrieval and Person Re-Identification},
    booktitle = {{CVPR}},
    year      = {2019}
}

@inproceedings{tan2021instance,
  title={Instance-level image retrieval using reranking transformers},
  author={Tan, Fuwen and Yuan, Jiangbo and Ordonez, Vicente},
  booktitle={{ICCV}},
  year={2021}
}

@inproceedings{sarfraz2018pose,
  title={A pose-sensitive embedding for person re-identification with expanded cross neighborhood re-ranking},
  author={Sarfraz, M Saquib and Schumann, Arne and Eberle, Andreas and Stiefelhagen, Rainer},
  booktitle={{CVPR}},
  year={2018}
}

@inproceedings{shen2021re,
  title={Re-ranking for image retrieval and transductive few-shot classification},
  author={Shen, Xi and Xiao, Yang and Hu, Shell Xu and Sbai, Othman and Aubry, Mathieu},
  booktitle={NeurIPS},
  year={2021}
}

@inproceedings{ouyang2021contextual,
  title={Contextual similarity aggregation with self-attention for visual re-ranking},
  author={Ouyang, Jianbo and Wu, Hui and Wang, Min and Zhou, Wengang and Li, Houqiang},
  booktitle={NeurIPS},
  year={2021}
}

@inproceedings{gunther2017unconstrained,
  title={Unconstrained face detection and open-set face recognition challenge},
  author={G{\"u}nther, Manuel and Hu, Peiyun and Herrmann, Christian and Chan, Chi-Ho and Jiang, Min and Yang, Shufan and Dhamija, Akshay Raj and Ramanan, Deva and Beyerer, J{\"u}rgen and Kittler, Josef and others},
  booktitle={{IJCB}},
  year={2017},
  organization={IEEE}
}

@inproceedings{ross2019some,
  title={Some research problems in biometrics: The future beckons},
  author={Ross, Arun and Banerjee, Sudipta and Chen, Cunjian and Chowdhury, Anurag and Mirjalili, Vahid and Sharma, Renu and Swearingen, Thomas and Yadav, Shivangi},
  booktitle={{ICB}},
  year={2019},
}

@article{gnedenko1943distribution,
  title={Sur la distribution limite du terme maximum d'une serie aleatoire},
  author={Gnedenko, Boris},
  journal={Annals of mathematics},
  year={1943},
}

@inproceedings{fisher1928limiting,
  title={Limiting forms of the frequency distribution of the largest or smallest member of a sample},
  author={Fisher, Ronald Aylmer and Tippett, Leonard Henry Caleb},
  booktitle={Mathematical proceedings of the Cambridge philosophical society},
  year={1928},
}

@article{elfving1947asymptotical,
  title={The asymptotical distribution of range in samples from a normal population},
  author={Elfving, Gustav},
  journal={Biometrika},
  year={1947},
}

@article{jain2021biometrics,
  title={Biometrics: Trust, but verify},
  author={Jain, Anil K and Deb, Debayan and Engelsma, Joshua J},
  journal={IEEE Transactions on Biometrics, Behavior, and Identity Science},
  year={2021},
}

@book{ross2006handbook,
  title={Handbook of multibiometrics},
  author={Ross, Arun A and Jain, Anil K and Nandakumar, Karthik},
  year={2006},
  publisher={Springer}
}

@inproceedings{poh2002multi,
  title={A multi-sample multi-source model for biometric authentication},
  author={Poh, Norman and Bengio, Samy and Korczak, Jerzy},
  booktitle={Proceedings of the 12th ieee workshop on neural networks for signal processing},
  year={2002},
  organization={IEEE}
}

@inproceedings{ebrahimi2025gif,
  title={GIF: Generative Inspiration for Face Recognition at Scale},
  author={Ebrahimi, Saeed and Rahimi, Sahar and Dabouei, Ali and Das, Srinjoy and Dawson, Jeremy M and Nasrabadi, Nasser M},
  booktitle={Proceedings of the Computer Vision and Pattern Recognition Conference},
  pages={3528--3539},
  year={2025}
}

@inproceedings{jin2025denoising,
  title={On Denoising Walking Videos for Gait Recognition},
  author={Jin, Dongyang and Fan, Chao and Ma, Jingzhe and Zhou, Jingkai and Chen, Weihua and Yu, Shiqi},
  booktitle={{CVPR}},
  year={2025}
}

@inproceedings{shen2025lidargait++,
  title={LidarGait++: Learning Local Features and Size Awareness from LiDAR Point Clouds for 3D Gait Recognition},
  author={Shen, Chuanfu and Wang, Rui and Duan, Lixin and Yu, Shiqi},
  booktitle={{CVPR}},
  year={2025}
}

@inproceedings{yang2025bridging,
  title={Bridging gait recognition and large language models sequence modeling},
  author={Yang, Shaopeng and Wang, Jilong and Hou, Saihui and Liu, Xu and Cao, Chunshui and Wang, Liang and Huang, Yongzhen},
  booktitle={{CVPR}},
  year={2025}
}

@inproceedings{wang2025secap,
  title={SeCap: Self-Calibrating and Adaptive Prompts for Cross-view Person Re-Identification in Aerial-Ground Networks},
  author={Wang, Shining and Wang, Yunlong and Wu, Ruiqi and Jiao, Bingliang and Wang, Wenxuan and Wang, Peng},
  booktitle={{CVPR}},
  year={2025}
}

@inproceedings{cui2025dkc,
  title={DKC: Differentiated Knowledge Consolidation for Cloth-Hybrid Lifelong Person Re-identification},
  author={Cui, Zhenyu and Zhou, Jiahuan and Peng, Yuxin},
  booktitle={{CVPR}},
  year={2025}
}

@inproceedings{liang2025differ,
  title={DIFFER: Disentangling Identity Features via Semantic Cues for Clothes-Changing Person Re-ID},
  author={Liang, Xin and Rawat, Yogesh S},
  booktitle={{CVPR}},
  year={2025}
}

@inproceedings{yuan2025poses,
  title={From poses to identity: Training-free person re-identification via feature centralization},
  author={Yuan, Chao and Zhang, Guiwei and Ma, Changxiao and Zhang, Tianyi and Niu, Guanglin},
  booktitle={{CVPR}},
  year={2025}
}

@inproceedings{yang2025cheb,
  title={Cheb-GR: Rethinking K-nearest Neighbor Search in Re-ranking for Person Re-identification},
  author={Yang, Jinxi and Li, He and Du, Bo and Ye, Mang},
  booktitle={CVPR},
  year={2025}
}

@inproceedings{pang2025identity,
  title={Identity-Clothing Similarity Modeling for Unsupervised Clothing Change Person Re-Identification},
  author={Pang, Zhiqi and Wang, Junjie and Zhao, Lingling and Wang, Chunyu},
  booktitle={{CVPR}},
  year={2025}
}

@inproceedings{chen2025learning,
  title={Learning from Synchronization: Self-Supervised Uncalibrated Multi-View Person Association in Challenging Scenes},
  author={Chen, Keqi and Srivastav, Vinkle and Mutter, Didier and Padoy, Nicolas},
  booktitle={{CVPR}},
  year={2025}
}

@inproceedings{you2025lvface,
  title={LVFace: Progressive Cluster Optimization for Large Vision Models in Face Recognition},
  author={You, Jinghan and Li, Shanglin and Sun, Yuanrui and Wei, Jiangchuan and Guo, Mingyu and Feng, Chao and Ran, Jiao},
  booktitle={{ICCV}},
  year={2025}
}

@inproceedings{peng2025stylized,
  title={Stylized-Face: A Million-level Stylized Face Dataset for Face Recognition},
  author={Peng, Zhengyuan and Xu, Jianqing and Huang, Yuge and Hao, Jinkun and Ding, Shouhong and Zhang, Zhizhong and Tan, Xin and Ma, Lizhuang},
  booktitle={ICCV},
  year={2025}
}

@inproceedings{kim2025vigface,
  title={VIGFace: Virtual Identity Generation for Privacy-Free Face Recognition Dataset},
  author={Kim, Minsoo and Sagong, Min-Cheol and Nam, Gi Pyo and Cho, Junghyun and Kim, Ig-Jae},
  booktitle={Proceedings of the IEEE/CVF International Conference on Computer Vision},
  pages={10043--10053},
  year={2025}
}

@inproceedings{kim2025idface,
  title={IDFace: Face Template Protection for Efficient and Secure Identification},
  author={Kim, Sunpill and Paik, Seunghun and Hwang, Chanwoo and Kim, Dongsoo and Shin, Junbum and Seo, Jae Hong},
  booktitle={{CVPR}},
  year={2025}
}

@inproceedings{ye2025biggergait,
  title={Biggergait: Unlocking gait recognition with layer-wise representations from large vision models},
  author={Ye, Dingqiang and Fan, Chao and Huang, Zhanbo and Luo, Chengwen and Li, Jianqiang and Yu, Shiqi and Liu, Xiaoming},
  booktitle={{NeurIPS}},
  year={2025}
}

@inproceedings{habib2025cargait,
  title={Cargait: Cross-attention based re-ranking for gait recognition},
  author={Habib, Gavriel and Barzilay, Noa and Shimshi, Or and Ben-Ari, Rami and Darshan, Nir},
  booktitle={{ICCV}},
  year={2025}
}

@inproceedings{wang2025gait,
  title={Gait-X: Exploring X modality for Generalized Gait Recognition},
  author={Wang, Zengbin and Hou, Saihui and Li, Junjie and Liu, Xu and Cao, Chunshui and Huang, Yongzhen and Wang, Siye and Zhang, Man},
  booktitle={{ICCV}},
  year={2025}
}

@inproceedings{huang2025learning,
  title={Learning A Unified Template for Gait Recognition},
  author={Huang, Panjian and Hou, Saihui and Huang, Junzhou and Huang, Yongzhen},
  booktitle={{ICCV}},
  year={2025}
}

@inproceedings{khalid2025bridging,
  title={Bridging the Sky and Ground: Towards View-Invariant Feature Learning for Aerial-Ground Person Re-Identification},
  author={Khalid, Wajahat and Liu, Bin and Li, Xulin and Waqas, Muhammad and Afgan, Muhammad Sher},
  booktitle={{ICCV}},
  year={2025}
}

@inproceedings{li2025one,
  title={One-Shot Knowledge Transfer for Scalable Person Re-Identification},
  author={Li, Longhua and Qi, Lei and Geng, Xin},
  booktitle={{ICCV}},
  year={2025}
}

@inproceedings{zhang2025viperson,
  title={VIPerson: Flexibly Generating Virtual Identity for Person Re-Identification},
  author={Zhang, Xiao-Wen and Zhang, Delong and Peng, Yi-Xing and Ouyang, Zhi and Meng, Jingke and Zheng, Wei-Shi},
  booktitle={{ICCV}},
  year={2025}
}

@inproceedings{su2025hamobe,
  title={Hamobe: Hierarchical and adaptive mixture of biometric experts for video-based person reid},
  author={Su, Yiyang and Shi, Yunping and Liu, Feng and Liu, Xiaoming},
  booktitle={{ICCV}},
  year={2025}
}

@inproceedings{xu2025self,
  title={Self-Reinforcing Prototype Evolution with Dual-Knowledge Cooperation for Semi-Supervised Lifelong Person Re-Identification},
  author={Xu, Kunlun and Zhuo, Fan and Li, Jiangmeng and Zou, Xu and Zhou, Jiahuan},
  booktitle={{ICCV}},
  year={2025}
}

@article{zhang2023graph,
  title={Graph convolution based efficient re-ranking for visual retrieval},
  author={Zhang, Yuqi and Qian, Qi and Wang, Hongsong and Liu, Chong and Chen, Weihua and Wang, Fan},
  journal={IEEE Transactions on Multimedia},
  year={2023},
}

@inproceedings{barman2017shape,
  title={Shape: A novel graph theoretic algorithm for making consensus-based decisions in person re-identification systems},
  author={Barman, Arko and Shah, Shishir K},
  booktitle={{ICCV}},
  year={2017}
}

@article{best2014unconstrained,
  title={Unconstrained face recognition: Identifying a person of interest from a media collection},
  author={Best-Rowden, Lacey and Han, Hu and Otto, Charles and Klare, Brendan F and Jain, Anil K},
  journal={IEEE Transactions on Information Forensics and Security},
  year={2014},
}

@article{liu2025person,
  title={Person Recognition at Altitude and Range: Fusion of Face, Body Shape and Gait},
  author={Liu, Feng and Chimitt, Nicholas and Guo, Lanqing and Jain, Jitesh and Kane, Aditya and Kim, Minchul and Robbins, Wes and Su, Yiyang and Ye, Dingqiang and Zhang, Xingguang and others},
  journal={arXiv preprint arXiv:2505.04616},
  year={2025}
}

@inproceedings{zhu2025quality,
  title={A quality-guided mixture of score-fusion experts framework for human recognition},
  author={Zhu, Jie and Su, Yiyang and Kim, Minchul and Jain, Anil and Liu, Xiaoming},
  booktitle={{ICCV}},
  year={2025}
}

@inproceedings{kim2025sapiensid,
  title={Sapiensid: Foundation for human recognition},
  author={Kim, Minchul and Ye, Dingqiang and Su, Yiyang and Liu, Feng and Liu, Xiaoming},
  booktitle={{CVPR}},
  year={2025}
}

@article{ren2023chatgpt,
  title={ChatGPT-powered hierarchical comparisons for image classification},
  author={Ren, Zhiyuan and Su, Yiyang and Liu, Xiaoming},
  journal={Advances in neural information processing systems},
  volume={36},
  pages={69706--69718},
  year={2023}
}
